\documentclass{article}

\usepackage[eandd,preprint]{neurips_2026}

\usepackage[utf8]{inputenc}
\usepackage[T1]{fontenc}
\usepackage{hyperref}
\usepackage{url}
\usepackage{booktabs}
\usepackage{amsfonts}
\usepackage{amsmath}
\usepackage{amssymb}
\usepackage{amsthm}
\usepackage{nicefrac}
\usepackage{xcolor}
\usepackage{enumitem}
\usepackage{mathtools}
\usepackage{graphicx}
\usepackage{multirow}
\usepackage[normalem]{ulem}
\usepackage{tikz}
\usetikzlibrary{positioning, arrows.meta, fit, backgrounds, calc}
\usepackage{listings}

\definecolor{defabBlue}{HTML}{1F3A6B}
\definecolor{defabTeal}{HTML}{3D6B7A}
\definecolor{defabGold}{HTML}{D9A441}
\definecolor{defabRed}{HTML}{B0413E}
\definecolor{defabGray}{HTML}{6B7280}
\newcommand{\imgplaceholder}{%
  \colorbox{defabGold!18}{%
    \fontsize{4.8}{5.4}\selectfont\bfseries\color{defabGold!85}\,IMG\,}%
}

\lstdefinestyle{defabpython}{%
  language=Python,
  basicstyle=\fontsize{6.8}{8.2}\ttfamily\color{defabGray!90!black},
  keywordstyle=\color{defabBlue}\bfseries,
  stringstyle=\color{defabTeal!85!black},
  commentstyle=\color{defabGray!75}\itshape,
  numberstyle=\fontsize{5.5}{6.5}\selectfont\color{defabGray!60},
  numbers=left, numbersep=6pt, stepnumber=1, numberblanklines=false,
  frame=single, rulecolor=\color{defabGray!50}, framerule=0.5pt,
  backgroundcolor=\color{defabGold!4},
  xleftmargin=10pt, xrightmargin=2pt, framexleftmargin=10pt,
  aboveskip=4pt, belowskip=4pt,
  breaklines=true, columns=fullflexible, keepspaces=true,
  showstringspaces=false,
  captionpos=b,
  abovecaptionskip=4pt, belowcaptionskip=2pt,
}

\newtheorem{theorem}{Theorem}[section]
\newtheorem{proposition}[theorem]{Proposition}
\newtheorem{corollary}[theorem]{Corollary}
\theoremstyle{definition}
\newtheorem{definition}[theorem]{Definition}
\newtheorem{example}[theorem]{Example}

\newcommand{\D}{\mathcal{D}}
\newcommand{\Dm}{\mathcal{D}^{-}}
\newcommand{\Dfull}{\mathcal{D}^{\mathrm{full}}}
\newcommand{\HB}{\mathcal{H}_B}
\newcommand{\Hcand}{\mathcal{H}_{\mathrm{cand}}}
\newcommand{\Hgold}{\mathcal{H}^{*}}
\newcommand{\Rgold}{\mathcal{H}^{*}_3}
\newcommand{\Rs}{R_s}
\newcommand{\Rd}{R_d}
\newcommand{\Rdf}{R_{df}}
\newcommand{\partfunc}{\kappa}
\newcommand{\Supp}{\mathrm{Supp}}

\newcommand{\CritStar}{\mathrm{Crit}^{*}}
\newcommand{\Exp}{\mathrm{Exp}}
\newcommand{\Nov}{\mathrm{Nov}}
\newcommand{\Score}{\mathrm{Score}}

\newcommand{\pDelta}{\vdash_{\Delta}}
\newcommand{\pPartial}{\vdash_{\partial}}
\newcommand{\defto}{\Rightarrow}
\newcommand{\strictto}{\rightarrow}
\newcommand{\defeatto}{\leadsto}
\newcommand{\comp}[1]{\overline{#1}}

\title{DeFAb: A Verifiable Benchmark for Defeasible Abduction in Foundation Models}

\author{%
  Patrick Cooper \\
  University of Colorado Boulder \\
  \texttt{patrick.cooper@colorado.edu} \\
  \And
  Alvaro Velasquez \\
  University of Colorado Boulder \\
  \texttt{alvaro.velasquez@colorado.edu} \\
}

\begin{document}

\maketitle

\begin{abstract}
A rule-based logic solver resolves every instance in our benchmark in under 50 microseconds with 100\% accuracy. The best frontier language model achieves 65\% at best, and drops to 23.5\% under rendering-robust evaluation (worst case over four surface renderings of the same logical content). We introduce \textsc{DeFAb} (\textbf{De}feasible \textbf{A}bduction \textbf{B}enchmark), a dataset and generation pipeline that converts four decades of publicly funded knowledge bases into formally grounded evaluation instances for defeasible abduction: the task of constructing hypotheses that explain anomalies by overriding default conclusions while preserving unrelated expectations. Because every hypothesis must pass polynomial-time checks for valid derivation, conservativity, and minimality, \textsc{DeFAb} turns logical rigor into the instrument for measuring creativity and theoretical reasoning, scoring the disciplined construction of theory revisions rather than fluent but theory-destroying prose. The pipeline pairs taxonomic hierarchies (OpenCyc, YAGO, Wikidata) with behavioral property graphs (ConceptNet, UMLS) to produce 372,648+ instances across 33.75 million materialized rules from 18 knowledge sources, stratified into three difficulty levels with polynomial-time verifiable gold-standard hypotheses. Evaluation of four frontier models reveals that current models do not reliably internalize defeasible reasoning: rendering-robust Level~2 accuracy ranges from 7.8 to 23.5\%; 80.9\% of Kimi-K2.5 responses fail to decode at all; chain-of-thought prompting variance ($\sigma \approx 36$~pp across eight model-level cells) exceeds the difference between any two models; and a synthetic contamination control, sharpened by a matched fact-injection ablation, isolates a mean Level~3 contamination gap of $+19.4$~pp. We further stress the benchmark along three axes: a \textsc{DeFAb-Hard} variant (a 235-instance Level~3 difficulty pilot generated by the same pipeline) on which the strongest model reaches 53.3\% while the symbolic solver stays at 100\%; cross-ontology and cross-environment generalization studies over 18 knowledge sources spanning biology, law, materials, and a fully disjoint rules-of-engagement domain; and a visual-grounding (M5) modality on which vision--language models inherit the same decoder brittleness. We additionally release \textsc{CONJURE}, a kernel-verified transformative-creativity variant of \textsc{DeFAb} comprising 560 Lean~4/Mathlib instances whose gold answers are definitions the proof-assistant kernel did not previously contain, with a judge-free polynomial-time verifier; an honest single-model pilot finds zero genuinely novel concepts under a three-tier novelty specification, establishing the falsification target the track is calibrated against. The benchmark's polynomial-time verifier also serves as an \emph{exact reward function} for preference optimization (DPO, RLVR/GRPO), enabling verifier-backed training as a downstream use of the released infrastructure. The dataset, pipeline, and evaluation harness are released under the MIT license at \url{https://huggingface.co/datasets/PatrickAllenCooper/DeFAb}.
\end{abstract}

%% ====================================================================
\section{Introduction}\label{sec:intro}
%% ====================================================================

A rule-based Answer Set Programming solver, running the same defeasible reasoning algorithm we use to generate our benchmark, resolves every evaluation instance with 100\% accuracy in under 50 microseconds~\citep{maher2001propositional}. The best frontier language model, under optimal chain-of-thought prompting, achieves 65\% on Level~2 rule abduction. Under the rendering-robust metric (the worst-case accuracy across four surface presentations of the same logical content), it achieves 23.5\%. This gap suggests a structural mismatch between current foundation-model inference and the explicit belief-revision operations the benchmark requires.

The gap reflects three entangled capability mismatches in current foundation models. The first is a deficit of \emph{grounding}: models lack an explicit epistemic structure distinguishing strict knowledge from revisable defaults, and cannot trace predictions to their supporting evidence~\citep{dunker2001intrinsically}. The second is a deficit of \emph{novelty}: without knowing which beliefs are revisable, models cannot identify where creative exceptions might apply. A striking biological illustration is the discovery of intrinsically disordered proteins (IDPs), proteins that lack fixed three-dimensional structure yet remain functionally essential. An AI trained on the prevailing structure-function dogma would be unlikely to hypothesize the existence of IDPs, because the hypothesis directly contradicts the domain knowledge encoded in its training corpus. The third deficit is one of \emph{belief revision}: even when models do update knowledge, they lack the formal machinery to ensure that updates respect the principle of minimal change~\citep{alchourron1985logic}, accommodating new evidence while disturbing as few existing commitments as possible. The IDP example illustrates all three deficits simultaneously, since the discovery requires not merely asserting that disordered proteins exist but doing so via a targeted revision that overrides the structure-function default for a specific subclass while preserving predictions about conventionally structured proteins.

Defeasible reasoning provides the structure these deficits demand. In a defeasible theory, every conclusion is derived through a traceable chain of strict and defeasible rules, every default is explicitly revisable, and every piece of evidence participates in an identifiable support set. This furnishes grounding, the machinery for novelty, and a concrete operationalization of rational belief revision through our conservativity requirement (Definition~\ref{def:conserv}), which ensures that resolutions preserve all expectations except the targeted anomaly and directly implements the AGM minimal-change postulate~\citep{gardenfors1988knowledge, katsuno1991propositional}. This reframing carries a methodological payoff: because a defeasible theory makes derivation, conservativity, and minimality decidable in polynomial time, logical rigor becomes the instrument for measuring creativity and theoretical reasoning, scoring whether a model can construct a valid theory revision (inventing or repairing a rule that resolves an anomaly without collateral damage) rather than whether it can produce a fluent but theory-destroying explanation. The same rigor that classical logic lacked for commonsense reasoning is precisely what makes creative and theoretical competence auditable here, distinguishing a conservative exception from a plausible-sounding overgeneralization.

The infrastructure to instantiate this framework at scale already exists. From the 1980s onward, publicly funded programs (Japan's FGCS~\citep{fuchi1984aiming, feigenbaum1993fgcs}, the UK's Alvey~\citep{alvey1985ikbs}, the European ESPRIT, DARPA's Strategic Computing funding Cyc~\citep{lenat1990cyc, lenat1995cyc}) pursued the formal encoding of commonsense and expert knowledge in revisable logical form. The thrust continued through NSF (WordNet~\citep{miller1995wordnet}, ConceptNet~\citep{speer2017conceptnet}), NIH (UMLS~\citep{lindberg1993umls}, Gene Ontology~\citep{ashburner2000go}), EU ontologies (LKIF Core~\citep{lkif2007}, BabelNet~\citep{navigli2012babelnet}), and the encyclopedic projects Wikidata~\citep{vrandecic2014wikidata}, DBpedia~\citep{lehmann2015dbpedia}, and YAGO~\citep{suchanek2024yago}. These efforts encoded the two ingredients defeasible reasoning requires by design: default generalizations and structured exceptions (Wikidata's P2303 ``exception to constraint'', ConceptNet's \texttt{NotCapableOf}, the Gene Ontology's \texttt{NOT}-qualified annotations). In the deep-learning era this infrastructure has been treated as evaluation backdrop rather than as the active formal scaffold it was built to be. DeFAb takes the position that this infrastructure is not obsolete but underused, and that activating it as a verifier-backed substrate for evaluation and training constitutes a renewed thrust: not a return to symbolic AI but a synthesis in which decades of formally structured public knowledge become the ground truth that makes verifier-grounded learning of defeasible reasoning possible.

\begin{figure}[t]
\centering
\resizebox{\textwidth}{!}{% Figure 0: Intuition pump for the paper.
% Three-panel horizontal storyboard of the intrinsically disordered protein
% (IDP) discovery as a defeater abduction. Compact (~3.4cm tall) so it can sit
% in the introduction without burning page budget.
%
% Layout (anchor=north on every panel):
%   Panel A (Default rule)    at (-5.7, 0)   width 3.6cm
%   Panel B (Anomaly)          at ( 0.0, 0)   width 3.6cm
%   Panel C (Constructed       at ( 5.7, 0)   width 3.6cm
%            defeater)
%   Bottom verdict strip       at (0,   -3.0) width 12.0cm
%
\begin{tikzpicture}[
    >=stealth,
    panel/.style={
        draw=defabGray!50, rounded corners=2.5pt,
        font=\fontsize{5.8}{6.9}\selectfont,
        inner sep=3.5pt, align=left, fill=white,
        line width=0.4pt,
        text width=3.7cm, minimum height=2.6cm, anchor=north
    },
    flow/.style={->, thick, color=defabGray!75, line width=0.9pt,
                 shorten >=2pt, shorten <=2pt},
    flowlbl/.style={
        font=\fontsize{6}{7}\selectfont\itshape,
        text=defabGray, fill=white, inner sep=2pt
    },
    verdict/.style={
        draw=defabGold!70, rounded corners=2.5pt,
        font=\fontsize{6.2}{7.4}\selectfont,
        inner sep=4pt, align=center, fill=defabGold!7,
        line width=0.45pt, text width=14.0cm,
        minimum height=0.7cm, anchor=north
    },
]

%% =================================================================
%% Panel A: The default (textbook biology)
%% =================================================================
\node[panel] at (-5.7, 0) (pa) {%
  {\fontsize{7.5}{9}\selectfont\bfseries\color{defabBlue}(1) Default rule}\\[2pt]
  Textbook dogma: every protein folds into a fixed 3D structure; function follows shape via lock-and-key.\\[2pt]
  {\fontsize{5.8}{6.8}\selectfont\color{defabGray}\texttt{protein(X)} $\Rightarrow$ \texttt{has\_3d(X)} $\Rightarrow$ \texttt{func(X,\,lock)}}\\[2pt]
  \colorbox{defabBlue!10}{\textcolor{defabBlue}{\bfseries\,fits PDB pre-1996\,}}%
};

%% =================================================================
%% Panel B: The anomaly (p53 IDR observation)
%% =================================================================
\node[panel] at (0.0, 0) (pb) {%
  {\fontsize{7.5}{9}\selectfont\bfseries\color{defabRed}(2) Anomaly}\\[2pt]
  \texttt{p53\_idr} is functional yet has no fixed 3D structure; the default predicts $\lnot$\texttt{func}.\\[2pt]
  {\fontsize{5.8}{6.8}\selectfont\color{defabRed}\texttt{disordered(p53\_idr)}, \texttt{functional(p53\_idr)}: both observed; default contradicts.}\\[2pt]
  \colorbox{defabRed!10}{\textcolor{defabRed}{\bfseries\,prediction contradicts observation\,}}%
};

%% =================================================================
%% Panel C: The constructed defeater
%% =================================================================
\node[panel] at (5.7, 0) (pc) {%
  {\fontsize{7.5}{9}\selectfont\bfseries\color{defabTeal}(3) Constructed defeater}\\[2pt]
  Required revision: override the default \emph{only} for disordered proteins, preserving every other prediction.\\[2pt]
  {\fontsize{5.8}{6.8}\selectfont\color{defabTeal}\texttt{disordered(X), protein(X)} $\Rightarrow$ \texttt{func(X,\,conf)},\ $r_4 \succ r_2$}\\[2pt]
  \colorbox{defabTeal!12}{\textcolor{defabTeal}{\bfseries\,conservative, novel, mechanistic\,}}%
};

%% =================================================================
%% Inter-panel arrows
%% =================================================================
\draw[flow] (pa.east) -- node[flowlbl, above] {observe} (pb.west);
\draw[flow] (pb.east) -- node[flowlbl, above] {abduce} (pc.west);

%% =================================================================
%% Verdict strip (gap-of-capability summary)
%% =================================================================
\node[verdict] at (0, -3.10) (v) {%
  Verifier resolves any such instance in under 50\,$\mu$s at 100\% accuracy. Best frontier LLM on L3 direct: \textbf{0.8--37\%}; rendering-robust: \textbf{7.8--23.5\%}. \textsc{DeFAb} makes this gap measurable, contamination-controlled, and trainable.%
};

\end{tikzpicture}}
\caption{The intrinsically disordered protein (IDP) discovery as a defeater abduction, illustrating the task DeFAb measures and trains for. (1)~textbook default: proteins fold into 3D structures that determine function. (2)~anomaly: \texttt{p53\_idr} is functional yet disordered, contradicting the default's prediction. (3)~required revision: a new rule that overrides the default \emph{only} for disordered proteins and posits a new mechanism (conformational ensembles). The polynomial-time verifier resolves any such instance in microseconds; frontier LLMs do not.}
\label{fig:intuition}
\end{figure}

A further obstacle threatens the construction of any benchmark in this space. For well-documented defaults and exceptions such as ``birds fly'' and ``penguins do not fly,'' we cannot distinguish genuine defeasible reasoning from the retrieval of memorized pretraining solutions. \citet{zhang2024gsm1k} demonstrated accuracy drops attributable to contamination on grade-school mathematics, and LiveCodeBench~\citep{jain2025livecodebench} exposed contamination even in frontier models through time-segmented evaluation. For defeasible reasoning benchmarks grounded in well-known knowledge bases, the risk is acute, and addressing it requires evaluation instances whose solutions provably fall outside any pretraining corpus.

We introduce \textsc{DeFAb} (\textbf{De}feasible \textbf{A}bduction \textbf{B}enchmark) to address these deficits jointly. Figure~\ref{fig:intuition} shows the central instance type. Our demonstrated contributions are: (i)~a generation pipeline that converts definite logic programs into defeasible theories via a parameterized partition function $\partfunc$ with polynomial-time instance generation at three levels (fact completion, rule abduction, defeater abduction) under formal conservativity; (ii)~a cross-ontology extraction over 18 knowledge sources and 33.75M materialized rules from Cyc (1984) to UMLS 2025AB; (iii)~a synthetic contamination control with predicates verified absent from Common Crawl via \texttt{infini-gram}~\citep{liu2024infini}; (iv)~baseline evaluation of four frontier models revealing brittleness under rendering-robust evaluation (7.8--23.5\%), high decoder-failure rates, and prompting variance ($\sigma = 36$~pp) that dominates measured capability; (v)~a rendering-robust evaluation metric that separates surface-form sensitivity from epistemic reasoning; (vi)~a battery of robustness ablations---a matched-injection contamination gap of $+19.4$~pp, a constrained-output ablation that isolates format from reasoning, and a cross-benchmark comparison exposing 90+~pp ranking inversions; (vii)~\textsc{DeFAb-Hard}, a difficulty-stratified \emph{variant} of \textsc{DeFAb} generated by the same pipeline (235 Level~3 instances on three pre-registered axes) that maps the frontier above current capability; and (viii)~cross-domain transfer onto a fully disjoint rules-of-engagement domain, including a verifier-gated commander whose formal check is robust to prompt-injection jailbreaks, and a visual-grounding (M5) modality with an open-VLM pilot; and (ix)~\textsc{CONJURE}, a kernel-verified transformative-creativity \emph{variant} of \textsc{DeFAb} (560 Lean~4/Mathlib instances across eight Lakatos families) whose every gold answer is a definition the proof-assistant kernel did not previously contain, released with a judge-free polynomial-time verifier and an honest single-model pilot; and (x)~released self-play search (AlphaZero-style expert iteration over the defeater-construction MDP) and adversarial-debate (MCTS with Author-Algorithm proof permutation) infrastructure, both grounded in the same exact verifier, with a pre-registered evaluation and a symbolic debate-pipeline validation pilot. The pipeline's polynomial-time verifier additionally enables use as an exact reward function for DPO, RLVR/GRPO, and verifier-gated re-prompt; trained-model demonstrations are reserved for follow-on work.

%% ====================================================================
\section{Related Work}\label{sec:related}
%% ====================================================================

Non-monotonic reasoning has a long formal history. Reiter's default logic~\citep{reiter1980logic} and McCarthy's circumscription~\citep{mccarthy1980circumscription} addressed the inadequacy of classical logic for commonsense reasoning, and Nute's defeasible logic~\citep{nute1987defeasible, nute1994defeasible} offered a computationally tractable alternative. The KLM framework~\citep{kraus1990nonmonotonic} provided axiomatic foundations for rational non-monotonic inference. The proof theory we adopt follows \citet{antoniou2001defeasible}, who together with \citet{maher2001propositional} established that propositional defeasible derivation has linear-time complexity, a result that underlies the tractability of our generation pipeline. On the belief-revision side, the AGM postulates~\citep{alchourron1985logic, gardenfors1988knowledge} provide rationality conditions for updating beliefs under new evidence, with \citet{dalal1988investigations} introducing distance-based revision. Our conservativity requirement operationalizes minimal change in the defeasible setting, and our revision distance provides a tractable analogue of Dalal's distance metric.

Existing reasoning benchmarks largely evaluate forward, monotonic inference. ProofWriter~\citep{tafjord2021proofwriter} and LogicNLI~\citep{tian2021diagnosing} test deductive verification, not hypothesis generation; INABHYD~\citep{inabhyd2025} shows that language models fail to follow Occam's Razor in abductive contexts; and LogiDynamics~\citep{zheng2025logidynamics} studies the interplay of inductive, abductive, and deductive inference but over generic analogical tasks rather than formally specified defeasible theories. The closest related work is \textsc{DeFReasing}~\citep{allaway2025defreasing}, which probes defeasible property inheritance. DeFAb differs from \textsc{DeFReasing} in three fundamental ways: it is a construction task in which the model must generate the exception rule rather than classify given new information, it uses formally specified theories with verifier-backed gold standards computable in polynomial time, and its conservativity check operationalizes AGM minimal change in a way that classification-style benchmarks cannot.

The provenance of the knowledge bases on which DeFAb is built is described in Section~\ref{sec:intro} and revisited in Section~\ref{sec:sources}. The threat of pretraining contamination on these well-known sources is now well documented: \citet{zhang2024gsm1k} reported accuracy drops of up to eight percentage points on GSM1K relative to GSM8K, and LiveCodeBench~\citep{jain2025livecodebench} exposed contamination in code evaluation through time-segmented release dates. Our synthetic contamination control provides, to our knowledge, the first contamination-resistant evaluation methodology for defeasible reasoning.

Our positioning as a finetuning substrate connects to a rapidly growing literature on reinforcement learning with verifiable rewards. The RLVR paradigm~\citep{ouyang2022training} demonstrated that exact reward functions eliminate the approximation error of learned reward models; VerifyBench~\citep{zheng2025verifybench}, RLBFF~\citep{rlbff2025}, and the AIME CoT Verification dataset~\citep{wen2025aimecot} elaborate this idea with reference-based reward systems, rule-based verification, and verifier-agreement studies respectively. DeFAb strictly strengthens these settings: the polynomial-time defeasible verifier is deterministic, model-independent, and exact, with no sampling, no approximation, and no disagreement between verifiers.

%% ====================================================================
\section{DeFAb: Dataset Design}\label{sec:design}
%% ====================================================================

\begin{figure}[t]
\centering
\resizebox{\textwidth}{!}{% DeFAb combined pipeline + L1/L2/L3 generation figure
%
% All content boxes use pure \texttt{ASCII} — no math dollar-signs inside boxes.
% Math inside boxes creates variable line heights that defeat minimum-height.
% Every row uses anchor=north at an absolute y coordinate.
%
% Measured at fontsize 5.6/6.4 (inner sep=2.5pt on each side = 5pt = 0.176cm):
%   1 pure-text line : ~0.226cm + 0.176cm = 0.402cm  -> min height 0.50cm
%   2 pure-text lines: ~0.452cm + 0.176cm = 0.628cm  -> min height 0.78cm
%
% Row y-coordinates (top of box, anchor=north):
%   Pipeline arc   y =  0.00  (height ~0.56cm)
%   Level headers  y = -0.80  (height  0.40cm -> bottom -1.20)
%   Theory boxes   y = -1.30  (height  0.78cm -> bottom -2.08)  2 lines
%   Ablated boxes  y = -2.20  (height  0.78cm -> bottom -2.98)  2 lines
%   Gold boxes     y = -3.10  (height  0.50cm -> bottom -3.60)  1 line
%   Distractor     y = -3.75  (height  0.50cm -> bottom -4.25)  1 line
%   Modality strip y = -4.95  (height  0.42cm -> bottom -5.37)  -- 0.70cm gap for arrows
%   Footer label   y = -5.55
%
%   Column x-centres: L1=-3.50, L2=0, L3=+3.50  (text width 3.2cm)
%
\begin{tikzpicture}[
    >=stealth,
    % pipeline top-row boxes
    pipebox/.style={
        draw=defabBlue!70, rounded corners=2pt,
        font=\fontsize{6.2}{7.2}\selectfont,
        inner sep=3pt, text centered, align=center,
        fill=defabBlue!6, line width=0.4pt,
        minimum height=0.54cm, text width=1.28cm
    },
    % column header
    levhdr/.style={
        draw=defabTeal!70, rounded corners=2pt,
        font=\fontsize{6.5}{7.5}\selectfont\bfseries,
        inner sep=2.5pt, text centered, align=center,
        fill=defabTeal!12, line width=0.4pt,
        text width=3.2cm, minimum height=0.40cm,
        anchor=north, text=defabTeal
    },
    % theory box — 2 lines, PURE \texttt only
    tbox/.style={
        draw=defabGray!45, rounded corners=1.5pt,
        font=\fontsize{5.6}{6.4}\selectfont,
        inner sep=2.5pt, align=left, fill=white,
        line width=0.35pt,
        text width=3.2cm, minimum height=0.78cm, anchor=north
    },
    % ablated element box — 2 lines, red
    abox/.style={
        draw=defabRed!65, rounded corners=1.5pt,
        font=\fontsize{5.6}{6.4}\selectfont\bfseries,
        inner sep=2.5pt, align=left,
        fill=defabRed!7, line width=0.45pt,
        text width=3.2cm, minimum height=0.78cm, anchor=north, text=defabRed
    },
    % gold answer box — 1 line
    gbox/.style={
        draw=defabTeal!70, rounded corners=1.5pt,
        font=\fontsize{5.6}{6.4}\selectfont\bfseries,
        inner sep=2.5pt, align=left,
        fill=defabTeal!9, line width=0.40pt,
        text width=3.2cm, minimum height=0.50cm, anchor=north, text=defabTeal!85
    },
    % distractor box — 1 line, italic gray
    dbox/.style={
        draw=defabGray!35, rounded corners=1.5pt,
        font=\fontsize{5.4}{6.2}\selectfont\itshape,
        inner sep=2.5pt, align=left,
        fill=defabGray!4, line width=0.30pt,
        text width=3.2cm, minimum height=0.50cm, anchor=north, text=defabGray
    },
    % modality box
    mbox/.style={
        draw=defabGold!70, rounded corners=2pt,
        font=\fontsize{6.2}{7.0}\selectfont,
        inner sep=2.5pt, text centered, align=center,
        fill=defabGold!8, line width=0.4pt,
        text width=3.0cm, minimum height=0.42cm, anchor=north
    },
    parr/.style={->, thick, color=defabBlue!55, line width=0.6pt},
    sarr/.style={->, thin, color=defabGray!60, line width=0.40pt},
]

%% =================================================================
%% PIPELINE ARC
%% =================================================================
\node[pipebox] (kb) at (-5.15, 0)
    {KBs\\[-1pt]{\fontsize{5.0}{6}\selectfont\color{defabGray}15+}};
\node[pipebox, right=0.42cm of kb]  (lp) {Logic\\Prog.\ $\Pi$};
\node[pipebox, right=0.42cm of lp]  (dt) {Defeas.\\Theory $\D$};
\node[pipebox, right=0.42cm of dt]  (ab) {Ablate\\$e$};
\node[pipebox, right=0.42cm of ab]  (vr) {\textbf{Verifier}\\$O(|\D|^3)$};

\draw[parr] (kb) -- node[above,font=\fontsize{4.8}{5.8}\selectfont\itshape,
    text=defabGray]{form.}(lp);
\draw[parr] (lp) -- node[above,font=\fontsize{4.8}{5.8}\selectfont\itshape,
    text=defabGray]{$\kappa$}(dt);
\draw[parr] (dt) -- node[above,font=\fontsize{4.8}{5.8}\selectfont\itshape,
    text=defabGray]{ablate}(ab);
\draw[parr] (ab) -- node[above,font=\fontsize{4.8}{5.8}\selectfont\itshape,
    text=defabGray]{verify}(vr);

%% =================================================================
%% LEVEL HEADERS  (y = -0.80)
%% =================================================================
\node[levhdr] at (-3.50, -0.80) (h1) {L1: Fact Completion};
\node[levhdr] at (  0.00, -0.80) (h2) {L2: Rule Abduction};
\node[levhdr] at (  3.50, -0.80) (h3) {L3: Defeater Abduction};

%% =================================================================
%% THEORY BOXES  (y = -1.30, 2 lines, pure texttt)
%% =================================================================
\node[tbox] at (-3.50, -1.30) {%
  \textbf{Theory:}\ \texttt{bear(G) bear(P) arc(P)}\\
  \texttt{r\_d: bear=>hib}\ \ \texttt{r*: arc=>neg}%
};
\node[tbox] at ( 0.00, -1.30) {%
  \textbf{Theory:}\ \texttt{bear(G) bear(P) arc(P)}\\
  \texttt{r\_s: bear->mammal}\ \ \texttt{r*: arc=>neg}%
};
\node[tbox] at ( 3.50, -1.30) {%
  \textbf{Theory:}\ \texttt{bear(P) arc(P) win(P)}\\
  \texttt{r\_d: bear=>hib}%
};

%% =================================================================
%% ABLATED BOXES  (y = -2.20, 2 lines)
%% =================================================================
\node[abox] at (-3.50, -2.20) {%
  \textbf{Remove:}\ \texttt{bear(grizzly)}\\
  \textbf{Target}\ \texttt{hib(G)}: chain broken!%
};
\node[abox] at ( 0.00, -2.20) {%
  \textbf{Remove:}\ \texttt{r\_d: bear=>hib}\\
  \textbf{Target}\ \texttt{hib(G)}: no rule!%
};
\node[abox] at ( 3.50, -2.20) {%
  \textbf{Remove:}\ \texttt{r*} --- \texttt{r\_d} fires\\
  \texttt{hib(P)} derived: \textbf{anomaly!}%
};

%% =================================================================
%% GOLD BOXES  (y = -3.10, 1 line)
%% =================================================================
\node[gbox] at (-3.50, -3.10) (g1) {\textbf{Gold:}\ \texttt{bear(grizzly)} (restores)};
\node[gbox] at (  0.00, -3.10) (g2) {\textbf{Gold:}\ \texttt{bear(X)=>hib(X)}};
\node[gbox] at (  3.50, -3.10) (g3) {\textbf{Score 1.0:}\ \texttt{r*} restored; \texttt{r*>r\_d}};

%% =================================================================
%% DISTRACTOR BOXES  (y = -3.75, 1 line)
%% =================================================================
\node[dbox] at (-3.50, -3.75) (d1) {Distractor: \texttt{mammal(G)} [wrong pred.]};
\node[dbox] at (  0.00, -3.75) (d2) {Distractor: \texttt{mammal(X)=>hib} [too broad]};
\node[dbox] at (  3.50, -3.75) (d3) {Score 0.5: \texttt{neg(X):-bear(X)} [not conserv.]};

%% =================================================================
%% MODALITY STRIP  (y = -4.95, 0.70cm below distractor row for arrow room)
%% =================================================================
\node[mbox] at (-3.50, -4.95) (m1) {M1 Narrative};
\node[mbox] at (  0.00, -4.95) (m2) {M2--M4 Formal};
\node[mbox] at (  3.50, -4.95) (m3) {M5 Visual};

% Vertical arrows from distractor boxes down to modality boxes; now have
% 0.70cm of vertical room (was 0.13cm) so the arrowhead has space to render.
\draw[sarr] (d1.south) -- node[font=\fontsize{4.6}{5.5}\selectfont\itshape,
    text=defabGray, fill=white, inner sep=1pt, midway, right=0.5pt]{render}(m1.north);
\draw[sarr] (d2.south) -- (m2.north);
\draw[sarr] (d3.south) -- (m3.north);

\draw[{<[scale=0.55]}-{>[scale=0.55]}, color=defabGold!65, line width=0.4pt]
    (m1.east) -- (m2.west);
\draw[{<[scale=0.55]}-{>[scale=0.55]}, color=defabGold!65, line width=0.4pt]
    (m2.east) -- (m3.west);

\node[font=\fontsize{5.0}{6}\selectfont\itshape, text=defabGold!80,
      anchor=north] at (0, -5.55)
    {Five rendering modalities (M5 replaces entity names with images)};

%% =================================================================
%% Arrows: pipeline ablation → column headers
%% =================================================================
\draw[sarr] (ab.south) ..controls+(-0.30,-0.26) and +(0.12,0.26).. (h1.north);
\draw[sarr] (ab.south) -- (h2.north);
\draw[sarr] (ab.south) ..controls+(0.30,-0.26) and +(-0.12,0.26).. (h3.north);

\end{tikzpicture}}
\caption{The DeFAb generation pipeline and per-level instance structure. \textbf{Top:} Legacy knowledge bases are converted to defeasible theories via partition function $\kappa$; a critical element is ablated and the polynomial-time verifier ($O(|\D|^3)$) certifies the gold hypothesis. \textbf{Bottom:} The same bear-hibernation theory, ablated three ways. \emph{Level~1} removes the fact \texttt{bear(grizzly)}: the model must identify the missing observation. \emph{Level~2} removes the defeasible rule $r_d$: the model reconstructs the missing generalization; the distractor \texttt{mammal(X)$\Rightarrow$hib(X)} is tempting but too broad. \emph{Level~3} removes the defeater $r^*$: the theory now incorrectly predicts \texttt{hib(polar\_bear)} despite \texttt{winter\_active(polar\_bear)}; the model must \emph{construct} a conservative exception rule (Score~1.0), not a broad one that destroys other predictions (Score~0.5). Any generated instance can be rendered in five modalities (M1--M5).}
\label{fig:pipeline}
\end{figure}

We formalize legacy knowledge bases as definite logic programs over a first-order signature $\Sigma = (\mathcal{C}, \mathcal{F}, \mathcal{P})$, in which a program $\Pi$ is a finite set of clauses $h \leftarrow b_1, \ldots, b_n$ with semantics given by the least Herbrand model $\mathcal{M}_\Pi = \mathrm{lfp}(T_\Pi)$. Monotonicity ($\Pi \subseteq \Pi'$ implies $\mathcal{M}_\Pi \subseteq \mathcal{M}_{\Pi'}$) is precisely the limitation our framework addresses: monotonic theories cannot retract defaults in light of new evidence. A defeasible theory $\D = (F, \Rs, \Rd, \Rdf, {\succ})$ consists of facts~$F$, strict rules~$\Rs$, defeasible rules~$\Rd$, defeaters~$\Rdf$, and an acyclic superiority relation~$\succ$ (Definition~\ref{def:deftheory}); a literal $q$ is defeasibly provable, written $\D \pPartial q$, when there exists an applicable defeasible rule for $q$ and every attacking rule for $\comp{q}$ is inapplicable or overridden. We convert $\Pi$ into a defeasible theory via a partition function $\partfunc: \Pi \to \{s, d\}$, with $\phi_\partfunc(\Pi) = (F_\partfunc, \Rs^{\partfunc}, \Rd^{\partfunc}, \emptyset, \emptyset)$: clauses assigned~$s$ become facts or strict rules; clauses assigned~$d$ become defeasible rules; \emph{the empty defeater and superiority components are exactly what we ask the model to generate}. We study five structured partition families (leaf, rule, depth-$k$, random, type-grounded); the type-grounded variant maps relation type to epistemic status (taxonomic edges strict, capability assertions defeasible, exception relations seeding $\Rdf$ directly), and the all-strict partition is conservative: $q \in \mathcal{M}_\Pi \iff \D_\partfunc \pDelta q$ (Proposition~\ref{prop:strict}).

Given a converted theory $\D$ and derivable target $q$, an element $e$ is \emph{full-theory critical} if $\D \setminus \{e\} \not\pPartial q$ (decidable in polynomial time). We ablate a critical element to form $\Dm = \D \setminus \{e\}$, inject $k = 5$ syntactically similar distractors, and define $\Hgold = \{h \mid \Dm \cup \{h\} \pPartial q,\ h \text{ minimal}\}$. The three task levels correspond to three ablated-element types, illustrated on the bear-hibernation theory in Figure~\ref{fig:pipeline}: L1 removes a fact; L2 removes a defeasible rule and the model selects from candidates with syntactically similar distractors; L3 removes a defeater so the theory now derives $\comp{\alpha}$ contradicting the observation $\alpha$, and the model must \emph{construct} a conservative exception rule that resolves the anomaly while preserving unrelated expectations (Definition~\ref{def:conserv}). L3 gold standards are computed by working backwards from a complete theory $\Dfull$ containing $r^*$; gold hypotheses come from automated extraction (ConceptNet \texttt{NotCapableOf}, Wikidata~P2303), expert cross-validation, and domain-expert authoring for high-novelty cases. Appendices~\ref{app:example_idp} and~\ref{app:example} give walkthroughs.

A rendering codec translates between formal theories and natural language across five modalities, from most natural to most formal: narrative (M1), semi-formal (M2), annotated formal (M3), pure formal (M4), and visual grounding (M5). The decoder maps model outputs back to formal rules via exact match, template extraction, or semantic parsing. The headline metric is \emph{rendering-robust accuracy}, the worst case over M1--M4, so scores reflect reasoning about the epistemic structure rather than surface-form sensitivity. For Level~3 we use a graded function $\Score(h, \D, \alpha) \in \{0, 0.25, 0.5, 0.75, 1.0\}$ that decomposes along the stages of rational belief revision (unresolved / language-bias-violating / non-conservative / weak conservative / full conservative); it is computable in polynomial time. The M5 variant retains M4 syntax while replacing (M5-replace) or supplementing (M5-supplement) entity-grounding facts with images from Wikidata~P18, VisualSem~\citep{geigle2024babelimagenet}, and BabelNet~5.3, targeting atypical entities on which VLMs fail~\citep{park2025visage, chinchure2025blackswan}; details and examples in Figure~\ref{fig:m5} (Appendix~\ref{app:m5details}).

A central concern for any benchmark grounded in well-known knowledge bases is that the gold-standard answers may be present verbatim in pretraining corpora. To address this, we generate synthetic defeasible theories whose predicate and entity names are invented (e.g., \texttt{zorbic}, \texttt{flentoid}), produced by a context-free grammar over phonotactically valid syllable templates and verified for zero occurrence in Common Crawl via \texttt{infini-gram}~\citep{liu2024infini}. Synthetic theories preserve the structural parameters of their naturalistic counterparts (depth, branching, support size, defeater complexity) while guaranteeing that no gold-standard hypothesis can be retrieved from memorized training data; the contamination gap $\Delta_{\mathrm{synth}}(M, \ell) = \mathrm{Acc}(M, I_{\mathrm{nat}}, \ell) - \mathrm{Acc}(M, I_{\mathrm{syn}}, \ell)$ provides a per-model, per-level upper bound on contamination-attributable performance inflation. The full pipeline runs in $O(|\D|^3)$ time: defeasible derivation is $O(|R| \cdot |F|)$, full-theory criticality $O(|\D|^2 \cdot |F|)$, and gold-standard verification is in P (proofs in Appendix~\ref{app:proofs}, full table in Appendix~\ref{app:complexity}). It remains polynomial on the function-free datalog fragment (Theorem~\ref{thm:firstorder}). These bounds are what make the verifier viable not only as an evaluation oracle but also as a real-time reward signal during training.

%% ====================================================================
\section{Source Knowledge Bases}\label{sec:sources}
%% ====================================================================

\begin{figure}[t]
\centering
\includegraphics[width=\textwidth]{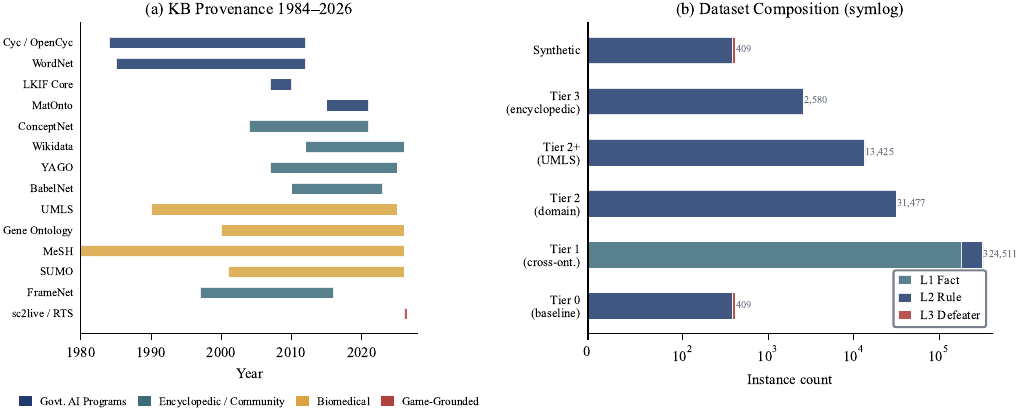}
\caption{(a)~Provenance timeline of 18 knowledge base sources spanning 1984 (Cyc) to 2025 (UMLS 2025AB), color-coded by source class: government AI programs (blue), encyclopedic / community (teal), biomedical (gold), and game-grounded (red). (b)~Dataset composition by tier and instance level on a symlog scale. Tier~1 (cross-ontology) provides the majority of instances; Level~3 (defeater abduction) appears only in Tier~0 and the matched synthetic control.}
\label{fig:sources}
\end{figure}

The provenance timeline in Figure~\ref{fig:sources}(a) traces the lineage of DeFAb's 18 knowledge sources through four programmatic threads (government AI, encyclopedic/community, biomedical, and game-grounded), each converted into a uniformly structured defeasible-theory representation that the same pipeline and verifier can consume. The baseline tier comprises expert-curated instances drawn from YAGO~4.5, WordNet~3.0, LKIF Core, and MatOnto. It contributes 2{,}318 rules and the 409 evaluation instances (374 Level~2, 35 Level~3) used for our primary evaluation, frozen at v1.0 for cross-study comparability.

Tier~1, the primary scale mechanism, pairs the OpenCyc hierarchy~\citep{lenat1995cyc} (239K terms) with ConceptNet~5.8~\citep{speer2017conceptnet} (34M edges), traversing ancestor chains to inherit behavioral properties. The procedure yields strict taxonomic rules from \texttt{IsA} edges, defeasible behavioral rules from \texttt{CapableOf} and \texttt{HasProperty} edges, and defeaters from \texttt{NotCapableOf} edges, across five domains: biology, legal reasoning, materials science, chemistry, and everyday commonsense. Tier~2 adds domain-specific rules and structured defeaters from the Gene Ontology~\citep{ashburner2000go} (using \texttt{NOT}-qualified annotations), MeSH, SUMO~\citep{niles2001sumo}, FrameNet~\citep{baker1998framenet}, Wikidata~\citep{vrandecic2014wikidata} (whose P2303 ``exception to constraint'' property is the highest-quality automatic defeater source we have identified), UMLS~\citep{lindberg1993umls} (189 biomedical vocabularies, 29.5M rules), and BabelNet~\citep{navigli2012babelnet}. Tier~3 contributes 3.5M rules from full YAGO~4.5 extraction.

A separate game-grounded family (Tier RTS) shares the defeasible-theory schema with the naturalistic tiers but at zero predicate vocabulary overlap with them. The hand-authored \texttt{rts\_engagement} Rules-of-Engagement KB (148 rules, 162 facts, 6 hand-crafted Level~3 seed conflicts) is formally certified by \texttt{scripts/certify\_rts\_kb.py}; the Lux~AI Season~3 (NeurIPS~2024) competition rules add roughly 130 further rules; and a live StarCraft~II bridge (\texttt{src/blanc/sc2live/}) lifts game state into ground facts via an \texttt{ObservationLifter} and ingests replays via a \texttt{ReplayTraceExtractor}, generating several thousand Level~1 and Level~2 instances per game session. The vocabulary disjointness of this family makes it a useful instrument for testing whether reported accuracy on the naturalistic tiers reflects defeasible reasoning or surface vocabulary recognition.

\begin{table}[t]
\centering
\small
\caption{Knowledge base pipeline: extraction results by tier. The RTS row aggregates three game-grounded KBs; \texttt{sc2live} counts are session-dependent ($^{\ddagger}$). UMLS requires NLM license (obtained). $^{\dagger\dagger}$RTS rule total counts \texttt{rts\_engagement}+\texttt{lux\_ai\_s3} only. The 33.75M materialized rule count is dominated by UMLS biomedical taxonomic edges (29.5M, 87\%); per-source breakdowns are in the Datasheet (Appendix~\ref{app:datasheet}) and Croissant metadata.}
\label{tab:kbsummary}
\begin{tabular}{@{}llrr@{}}
\toprule
Tier & Primary Sources & Rules & Instances \\
\midrule
0 (baseline) & YAGO, WordNet, LKIF, MatOnto & 2,318 & 409 \\
1 (cross-ontology) & OpenCyc + ConceptNet & 289,305 & 324,511 \\
2 (domain-specific) & GO, MeSH, SUMO, FrameNet, Wikidata, BabelNet & 535,565 & 31,477 \\
2+ (biomedical) & UMLS 2025AB & 29,465,582 & 13,425 \\
3 (encyclopedic) & YAGO 4.5 full & 3,457,940 & 2,580 \\
RTS (game) & rts\_engagement, lux\_ai\_s3, sc2live$^{\ddagger}$ & 478$^{\dagger\dagger}$ & 246+$^{\ddagger}$ \\
\midrule
\multicolumn{2}{@{}l}{Materialized total (Tiers 0--3 + RTS)} & 33,751,188 & 372,648+$^{\ddagger}$ \\
\bottomrule
\end{tabular}
\end{table}

Across all tiers, the richest automated defeater sources are Wikidata P2303 constraint--exception pairs (11{,}557 defeaters), the Gene Ontology's \texttt{NOT}-qualified annotations (1{,}250), and SUMO axioms (792). Together with ConceptNet \texttt{NotCapableOf} (approximately 300{,}000 edges), they constitute, to our knowledge, the largest publicly available collection of structured defeasible exceptions assembled for benchmark generation.

%% ====================================================================
\section{Dataset Statistics and Structural Analysis}\label{sec:statistics}
%% ====================================================================

Beyond raw counts, the released instances admit a structural characterization that clarifies what each level measures and why difficulty concentrates where it does. Table~\ref{tab:structstats} reports the structural-difficulty tuple $\sigma(I) = (\ell, |\Supp|, |\Hgold|, \min|h|, \Nov^*)$ over the frozen Tier~0 set. Level~2 instances draw on large support sets (mean $|\Supp| = 147.0$, range 43--263): the visible theory is a substantial fragment of a real knowledge base, and the task is to select the one rule among six candidates whose head and body close the derivation. Level~3 instances are deliberately small (mean $|\Supp| = 6.74$, range 4--10) but require \emph{construction} rather than selection: the model must author a conservative defeater. Both levels have a unique gold hypothesis ($|\Hgold| = 1$) of minimal size ($\min|h| = 1$), so accuracy is unambiguous. Level~2 novelty is zero by construction (the gold rule reuses existing predicates); Level~3 carries mean novelty $\Nov^* = 0.143$ (range 0--0.5), the easiest belief-revision regime, which is precisely why the universal Level~3 failure documented in Section~\ref{sec:evaluation} is diagnostic rather than an artifact of difficulty.

\begin{table}[t]
\centering
\small
\caption{Structural-difficulty statistics for the frozen Tier~0 set (\texttt{experiments/results/difficulty\_distributions.json}). $|\Supp|$: support-set size; $|\Hcand|$: candidate-set size; $\Nov^*$: predicate novelty; $\min|h|$: minimal gold-hypothesis size. Entries are mean (std) with [min, max] where informative; $|\Hgold| = 1$ for all instances.}
\label{tab:structstats}
\begin{tabular}{@{}lccccc@{}}
\toprule
Level & $n$ & $|\Supp|$ & $|\Hcand|$ & $\Nov^*$ & $\min|h|$ \\
\midrule
L2 (rule abduction)     & 374 & 147.0 (104.9) [43, 263] & 5.97 (0.30) & 0.00 & 1 \\
L3 (defeater abduction) & 35  & \phantom{00}6.74 (1.38) [4, 10] & 6.00 (0.00) & 0.143 (0.226) [0, 0.5] & 1 \\
\bottomrule
\end{tabular}
\end{table}

The three naturalistic domains are intentionally unbalanced in volume (biology 114, legal 168, materials 92 at Level~2; a $\chi^2$ test against a uniform split rejects balance, $\chi^2 = 24.5$, $p < 10^{-5}$), reflecting the differing exception density of the source ontologies rather than a sampling choice. Per-domain symbolic-solver accuracy is nonetheless a uniform 100\% (Section~\ref{sec:evaluation}), so the imbalance does not confound the capability measurement. A partition-strategy robustness check confirms that instance difficulty, proxied by candidate-set size, is invariant to the size of the source theory: stratifying the generated instances into theory-size quartiles and applying a Kruskal--Wallis test yields no significant difference in candidate count across quartiles ($H = 9.2\times10^{-5}$, $p = 0.99$), whereas the same test on theory size itself is strongly significant by construction ($H = 276.9$, $p \approx 0$). The partition function therefore produces a difficulty distribution decoupled from the raw scale of the originating knowledge base, which is what makes cross-tier and cross-domain comparison meaningful.

%% ====================================================================
\section{Baseline Evaluation}\label{sec:evaluation}
%% ====================================================================

Before presenting model-by-model results we establish the structural context. The symbolic baseline, an Answer Set Programming solver (clingo) implementing the same defeasible derivation algorithm we use to generate the instances, achieves 100\% accuracy on all 374 Level~2 and all 35 Level~3 instances in under 50 microseconds per instance~\citep{maher2001propositional}. The result holds uniformly across domains: the solver resolves all 114 biology, 168 legal, and 92 materials Level~2 instances correctly, with mean wall-clock times of $48$, $47$, and $31~\mu$s respectively, and all 35 Level~3 instances (16 biology, 10 legal, 9 materials) in a mean of $13~\mu$s each (\texttt{experiments/results/symbolic\_baseline\_l2.json}, \texttt{symbolic\_baseline\_l3.json}). The instances are not difficult by the standards of formal reasoning; they are difficult for language models because language models do not perform formal reasoning. Every gap reported below is a gap to a microsecond rule-based solver, not to a more capable model.

%% --------------------------------------------------------------------
\subsection{Experimental Setup}\label{subsec:setup}
%% --------------------------------------------------------------------

We evaluate four frontier models: GPT-5.2-chat~\citep{openai2025gpt52}, Kimi-K2.5~\citep{moonshot2025kimi}, Claude~Sonnet~4.6~\citep{anthropic2024claude}, and DeepSeek-R1-Distill-Llama-70B~\citep{deepseek2025r1}, accessed via Azure AI Foundry except DeepSeek which is served via vLLM on CURC Alpine (A100 80\,GB). All models use greedy decoding (temperature 0). Each instance is presented under all four text rendering modalities (M1--M4) and two prompting strategies (direct, CoT). The CoT scaffold mirrors defeasible derivation: identify applicable rules, determine missing or blocking elements, check attackers and superiority, and propose a conservative hypothesis.

%% --------------------------------------------------------------------
\subsection{Primary Results}\label{subsec:results}
%% --------------------------------------------------------------------

\begin{figure}[t]
\centering
\includegraphics[width=\textwidth]{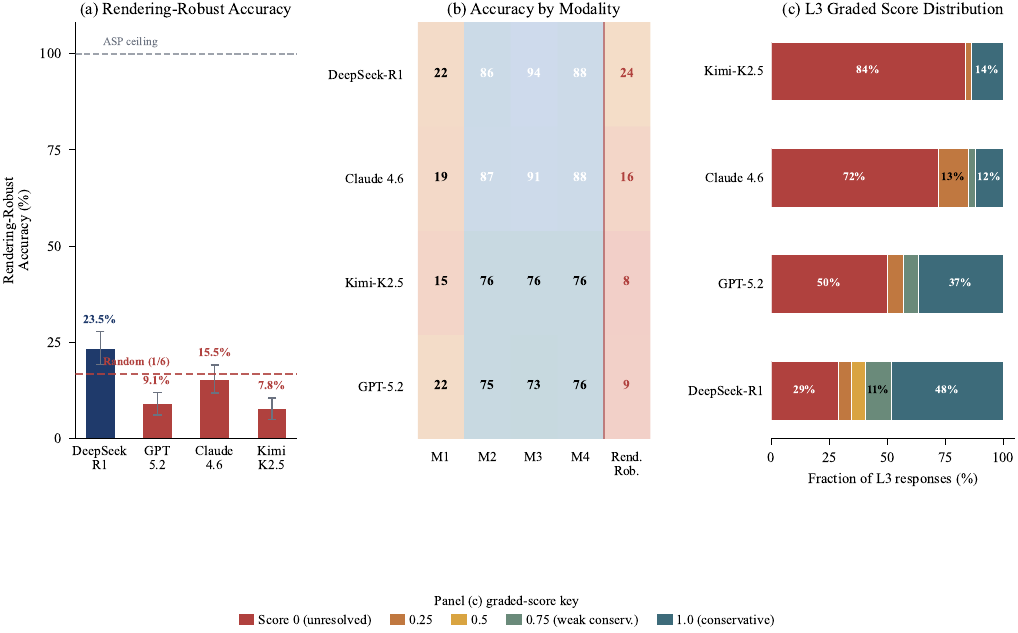}
\caption{(a)~Rendering-robust accuracy per model (worst case over M1--M4). Bars in red are at or below the 16.7\% random-chance baseline (dashed); all four models are far below the ASP symbolic ceiling (100\%, upper dashed). Two of four frontier models cannot beat random chance on the headline metric. (b)~Accuracy by rendering modality M1--M4 plus rendering-robust column. All models score 73--94\% on formal modalities (M2--M4) but collapse to 14--22\% on narrative M1, confirming that formal-modality performance reflects surface syntax pattern matching rather than reasoning. (c)~Graded-score distribution at Level~3: the ``Score~0'' (anomaly entirely unresolved) fraction dominates for Claude (72\%) and Kimi (84\%), showing complete task failure rather than near-miss. Per-cell variance of the CoT effect across all eight (model, level) cells is shown in Appendix~\ref{app:cot_fig}.}
\label{fig:results}
\end{figure}

\begin{table}[t]
\centering
\caption{Accuracy (\%) by model and task level. \emph{Rend.-Rob.} = rendering-robust (worst case over M1--M4); random chance = 16.7\%; ASP symbolic ceiling = 100\%. Wilson 95\% CI half-widths: $\pm 4$~pp at Level~2; $\pm 15$--$17$~pp at Level~3. All Level~3 pairwise differences significant by McNemar's test ($p < 0.001$) except Claude vs.\ Kimi ($p = 0.556$).}
\label{tab:main}
\begin{tabular}{lcccccc}
\toprule
Model & L2 Direct & L2 CoT & L3 Direct & L3 CoT & L3 Best & Rend.-Rob. \\
\midrule
ASP Symbolic Ceiling & 100\% & --- & 100\% & --- & 100\% & 100\% \\
\midrule
DeepSeek-R1         & 73.7\% & 71.4\% & 37.1\% & \textbf{92.9\%} & \textbf{65.0\%} & 23.5\% \\
GPT-5.2-chat        & \textbf{78.5\%} & 47.5\% & \phantom{0}7.9\% & 87.1\% & 47.5\% & \phantom{0}9.1\% \\
Claude~Sonnet~4.6   & 79.3\% & 52.3\% & \textbf{23.6\%} & \phantom{0}9.3\% & 16.4\% & 15.5\% \\
Kimi-K2.5           & 71.9\% & 70.4\% & \phantom{0}0.8\% & 27.6\% & 14.2\% & \phantom{0}7.8\% \\
\midrule
Random (1/6) & \multicolumn{5}{c}{16.7\%} & 16.7\% \\
\bottomrule
\end{tabular}
\end{table}

%% --------------------------------------------------------------------
\subsection{Key Findings}\label{subsec:findings}
%% --------------------------------------------------------------------

Tier~0 L2 ($n=374$, Wilson $\pm 4$~pp) is the statistically robust headline; Tier~0 L3 ($n=35$, $\pm 15$--$17$~pp) and the synthetic L3 control are upper-bound diagnostics. Figure~\ref{fig:results}(a) is the primary result. The rendering-robust accuracy (worst-case over M1--M4) is 7.8\% for Kimi, 9.1\% for GPT-5.2, 15.5\% for Claude, and 23.5\% for DeepSeek-R1. All four lie far below the 100\% ASP symbolic ceiling. The much higher accuracy on individual formal modalities (73--94\% at Level~2 in M2--M4) does not reflect robust defeasible reasoning: when the same logical content is presented as English prose in M1, all models collapse to 14--22\%, a 55--70~pp drop driven by 11{,}765 decoder failures on M1 versus 4{,}779 on M4 for the same instances.

L3 results (35 instances, upper-bound diagnostic) are stark but must be read alongside decoder-failure rates. Under direct prompting, Kimi achieves 0.8\% (1/125), GPT-5.2 achieves 7.9\%, DeepSeek-R1 achieves 37.1\%; 80.9\% of Kimi's responses are E1 (decoder failure), meaning the model cannot produce parseable hypotheses. However, separating format compliance from reasoning reveals a more nuanced picture: \emph{parse-conditioned accuracy} (accuracy given successful decoding) is 72.5\% for Kimi and 57.9\% for DeepSeek, showing that when models can formulate a valid response, they often reason correctly. Claude's parse-conditioned L3 accuracy is 16.6\% (most decoded responses are derivation failures, not format failures), confirming a genuine reasoning deficit distinct from the format bottleneck. The L3 direct prompt template inherits L1/L2 phrasing, so a fraction of E1 failures are prompt-design rather than capability-induced; the L3 CoT template is L3-specific. Table~\ref{tab:level3_metrics} reports the full metric suite.

Chain-of-thought prompting produces the paper's most striking methodological finding: CoT raises L3 accuracy by 56~pp for DeepSeek-R1 and 79~pp for GPT-5.2 (reasoning-optimized) but lowers Claude's by 14~pp (instruction-optimized), and produces a universal L2 regression of 1.5--31~pp, replicating the overthinking effect~\citep{chen2024overthinking}. The 36~pp standard deviation across the eight (model, level) cells (Figure~\ref{fig:cot_variance}, Appendix~\ref{app:cot_fig}) exceeds the difference between any two models. This means prompting-template choice dominates measured capability, and any single headline number that pools across prompting strategies is dominated by this confound rather than by the underlying reasoning capacity. Benchmarks for formal abductive reasoning must therefore report performance separately by prompting strategy.

\begin{table}[t]
\centering
\caption{Level~3 formal metrics by model. \emph{Res}: anomaly resolved; \emph{Cons}: conservative; $\overline{\Nov}$: mean novelty; \emph{E1/E2/E5}: error class breakdown (\%).}
\label{tab:level3_metrics}
\begin{tabular}{lrrrrrrrrr}
\toprule
Model & Acc & Res & Cons & $\overline{\Nov}$ & E1 & E2 & E5 \\
\midrule
DeepSeek-R1 & 65.0\% & 59.6\% & 59.6\% & 0.14 & 16.8\% & 23.6\% & 11.4\% \\
GPT-5.2-chat & 47.5\% & 42.9\% & 42.9\% & 0.23 & 26.1\% & 31.1\% & \phantom{0}6.1\% \\
Claude~Sonnet~4.6 & 16.4\% & 15.0\% & 15.0\% & 0.44 & 26.8\% & 58.2\% & \phantom{0}2.9\% \\
Kimi-K2.5 & 14.2\% & 13.9\% & 13.9\% & 0.11 & 80.9\% & \phantom{0}5.2\% & \phantom{0}0.0\% \\
\bottomrule
\end{tabular}
\end{table}

The synthetic contamination control reproduces the same panel on the 35 structurally matched L3 synthetic instances at M2 and M4 with both prompting strategies (560 calls). Synthetic L3 instances were post-processed via a small deterministic fact-injection step (disclosed in the released pipeline) so each defeasible rule's body grounds; this preserves the no-pretraining-vocabulary property. Averaging across the two modalities, per-model L3 accuracy drops uniformly: DeepSeek-R1 65.0\%~$\to$~52.9\% ($\Delta_{\mathrm{synth}} = +12.1$~pp), GPT-5.2-chat 47.5\%~$\to$~30.9\% ($+16.6$~pp), Claude~Sonnet~4.6 16.4\%~$\to$~2.1\% ($+14.3$~pp), Kimi-K2.5 14.2\%~$\to$~2.9\% ($+11.3$~pp); mean $+13.6$~pp. M2 and M4 give the same directional result, ruling out modality-specific artifacts. The uniformly positive sign is consistent with partial vocabulary-level memorization, with cognitive-load of unfamiliar predicates a concurrent contributor. Even after gap correction the best-model L3 accuracy (52.9\%) remains far below the symbolic ceiling.

A coverage probe on 52 Tier~1 L2 instances across all five cross-ontology domains (M4, direct + CoT, minimal-grounded-theory protocol, 416 calls) yields DeepSeek-R1 85.6\%, GPT-5.2 75.0\%, Claude 63.5\%, Kimi 55.8\%; reasoning models gain $+12$ to $+13$~pp over Tier~0 L2 while non-reasoning models lose ground due to elevated decoder failure (Appendix~\ref{app:tier1}).

\paragraph{Per-modality breakdown.} Table~\ref{tab:modality} resolves the rendering-robust collapse into its per-modality components. The narrative modality (M1) is the universal failure point: all four models fall to 14.6--22.0\% on M1 while scoring 72.8--94.1\% on the formal modalities (M2--M4), a 53--70~pp gap that exceeds any inter-model difference and is stable across model families. The annotated-formal modality (M3) is the easiest for the reasoning models (DeepSeek-R1 94.1\%) and the pure-formal modality (M4) the easiest for the instruction-tuned panel, but the dominant signal everywhere is the M1 cliff that the rendering-robust metric is designed to expose.

\begin{table}[t]
\centering
\caption{Accuracy (\%) by rendering modality (M1 narrative; M2 semi-formal; M3 annotated formal; M4 pure formal), all levels and prompting strategies pooled, with Wilson 95\% CIs. Every model collapses on M1 relative to the formal modalities; best per column in \textbf{bold}.}
\label{tab:modality}
\begin{tabular}{lcccc}
\toprule
Model & M1 & M2 & M3 & M4 \\
\midrule
DeepSeek-R1         & 21.5 [19--25]\% & 85.6 [83--88]\% & \textbf{94.1 [92--96]}\% & 87.8 [85--90]\% \\
Claude~Sonnet~4.6   & 19.0 [16--22]\% & \textbf{87.0 [84--89]}\% & 91.1 [89--93]\% & \textbf{88.1 [86--90]}\% \\
Kimi-K2.5           & 14.6 [12--17]\% & 75.9 [73--79]\% & 76.5 [73--80]\% & 75.6 [72--79]\% \\
GPT-5.2-chat        & \textbf{22.0 [19--25]}\% & 75.1 [72--78]\% & 72.8 [69--76]\% & 75.6 [72--79]\% \\
\bottomrule
\end{tabular}
\end{table}

\paragraph{Per-domain breakdown.} The benchmark's three naturalistic domains differ systematically in difficulty (Table~\ref{tab:domain_breakdown}). Legal (LKIF Core) is the hardest domain for every model at Level~2 (Claude 72\%, DeepSeek-R1 67\%, GPT-5.2 57\%, Kimi 64\%) and biology the easiest (82\%, 80\%, 68\%, 68\%); the ordering is stable across all four model families, reflecting the denser rule structure and richer defeasibility conditions of the legal ontology rather than idiosyncratic per-model knowledge gaps. The same ordering survives at Level~3 (descriptive only at these per-domain sample sizes). This domain-difficulty gradient is a property of the source theories, not the models, and is why the symbolic-solver's uniform 100\% across domains (Section~\ref{sec:evaluation}) is the appropriate ceiling against which the gradient is read.

\begin{table}[t]
\centering
\small
\setlength{\tabcolsep}{5pt}
\caption{Accuracy by domain and level. Level~2 Wilson 95\% CI half-width $\pm 3$--$4$~pp ($n \approx 228$--$288$ per cell); Level~3 figures are descriptive only (per-domain $n \in \{36,40,64\}$, CI half-widths $\pm 25$--$33$~pp). L2 pooled over all four modalities and both strategies; L3 pooled over modalities with the best-performing strategy. Best per column in \textbf{bold}.}
\label{tab:domain_breakdown}
\begin{tabular}{@{}lcccccc@{}}
\toprule
 & \multicolumn{3}{c}{\textbf{Level~2 (rule abduction)}} & \multicolumn{3}{c}{\textbf{Level~3 (defeater abduction)}} \\
\cmidrule(lr){2-4}\cmidrule(lr){5-7}
\textbf{Model} & \textbf{Biology} & \textbf{Legal} & \textbf{Materials} & \textbf{Biology} & \textbf{Legal} & \textbf{Materials} \\
\midrule
Claude~Sonnet~4.6   & \textbf{82\%} & \textbf{72\%} & \textbf{77\%} & 16\% & 12\% & 21\% \\
DeepSeek-R1         & 80\% & 67\% & 72\% & \textbf{72\%} & \textbf{56\%} & \textbf{62\%} \\
GPT-5.2-chat        & 68\% & 57\% & 64\% & 52\% & 44\% & 43\% \\
Kimi-K2.5           & 68\% & 64\% & 64\% & 17\% & 13\% & 10\% \\
\bottomrule
\end{tabular}
\end{table}

%% ====================================================================
\section{Robustness and Contamination Ablations}\label{sec:ablations}
%% ====================================================================

The headline numbers pool a single prompting choice per cell. This section isolates two confounds that any benchmark in this space must control for; both \emph{strengthen} the capability-mismatch conclusion rather than softening it.

\paragraph{Matched contamination gap.}
The synthetic control of \S\ref{subsec:findings} reports a mean $\Delta_{\mathrm{synth}} = +13.6$~pp, but the synthetic pipeline applies a deterministic fact-injection step that the naturalistic Tier~0 instances do not undergo, introducing a surface-composition confound. We remove it by applying the identical injection to the 35 naturalistic L3 instances (10 of 35 required injection) and re-evaluating all four models at M4 under both prompting strategies (Table~\ref{tab:contamination_matched}). The matched gap is \emph{larger}, not smaller: mean $\Delta_{\mathrm{synth}}^{\mathrm{matched}} = +19.4$~pp, with the three reasoning-style models all exceeding $+19$~pp while Claude's apparent gap shrinks to $+5.0$~pp once the injection artifact is removed. The contamination signal survives the pure-formal M4 rendering, indicating that it operates at the level of predicate co-occurrence statistics rather than verbatim recall.

\begin{table}[t]
\centering
\caption{Matched fact-injection ablation at Level~3, M4. $\Delta_{\mathrm{synth}}^{\mathrm{matched}} = \mathrm{Acc}(\text{nat-injected}) - \mathrm{Acc}(\text{syn-injected})$ removes the injection artifact by treating both sides identically. ``Inj.\ artifact'' is the accuracy change on the naturalistic side from injection alone.}
\label{tab:contamination_matched}
\small
\begin{tabular}{@{}lrrrrrr@{}}
\toprule
\textbf{Model} & Nat-orig & Nat-inj & Syn-inj & $\Delta_{\mathrm{synth}}^{\mathrm{old}}$ & $\Delta_{\mathrm{synth}}^{\mathrm{matched}}$ & Inj.\ artifact \\
\midrule
DeepSeek-R1        & 65.0\% & 83.3\% & 52.9\% & $+12.1$~pp & $\mathbf{+30.4}$~pp & $-18.3$~pp \\
GPT-5.2-chat       & 47.5\% & 50.0\% & 30.9\% & $+16.6$~pp & $+19.1$~pp & $\phantom{0}-2.5$~pp \\
Claude Sonnet 4.6  & 16.4\% & \phantom{0}7.1\% & \phantom{0}2.1\% & $+14.3$~pp & $\phantom{0}+5.0$~pp & $\phantom{0}+9.3$~pp \\
Kimi-K2.5          & 14.2\% & 25.8\% & \phantom{0}2.9\% & $+11.3$~pp & $+22.9$~pp & $-11.6$~pp \\
\midrule
Mean               & 35.8\% & 41.6\% & 22.2\% & $+13.6$~pp & $\mathbf{+19.4}$~pp & $\phantom{0}-5.8$~pp \\
\bottomrule
\end{tabular}
\end{table}

\paragraph{Format versus reasoning.}
A large fraction of L3 failures are E1 decoder failures (the model emits text that cannot be parsed into a formal hypothesis), raising the question of whether models fail at reasoning or merely at formatting. We wrap every L3 prompt in an explicit JSON schema and re-score (Table~\ref{tab:constrained}). The result cleanly separates the two: forcing valid output lifts Kimi-K2.5's L3 CoT accuracy by $+32.4$~pp (27.6\% $\to$ 60.0\%), bringing it to parity with GPT-5.2's baseline, while Claude collapses to a 97.1\% decode-failure rate, unable to produce schema-valid JSON for this task even with the schema in the prompt. Format compliance is a real, architecture-dependent bottleneck distinct from the reasoning deficit; benchmarks that do not constrain output conflate the two.

\begin{table}[t]
\centering
\caption{Constrained-output ablation at Level~3, M4. Baseline columns reproduce Table~\ref{tab:main}; constrained columns apply a JSON-schema wrapper. $\Delta$ is constrained CoT $-$ baseline CoT; ``Decode fail'' is the share of constrained CoT calls with no extractable hypothesis ($n=35$).}
\label{tab:constrained}
\small
\begin{tabular}{@{}lrrrrrrr@{}}
\toprule
& \multicolumn{2}{c}{\textbf{Direct}} & \multicolumn{3}{c}{\textbf{CoT}} & \multicolumn{2}{c}{} \\
\cmidrule(lr){2-3}\cmidrule(lr){4-6}
\textbf{Model} & Baseline & Constr. & Baseline & Constr. & $\Delta$ & Decode fail & $n$ \\
\midrule
DeepSeek-R1        & 37.1\% & 40.0\% & 92.9\% & 94.3\% & $+1.4$~pp & 17.1\% & 35 \\
GPT-5.2-chat       & \phantom{0}7.9\% & 14.3\% & 87.1\% & 91.4\% & $+4.3$~pp & \phantom{0}5.7\% & 35 \\
Claude Sonnet 4.6  & 23.6\% & \phantom{0}2.9\% & \phantom{0}9.3\% & \phantom{0}0.0\% & $-9.3$~pp & 97.1\% & 35 \\
Kimi-K2.5          & \phantom{0}0.8\% & \phantom{0}0.0\% & 27.6\% & 60.0\% & $\mathbf{+32.4}$~pp & 68.6\% & 35 \\
\bottomrule
\end{tabular}
\end{table}

%% ====================================================================
\section{Generalization: Tiers, Benchmarks, and a Disjoint Domain}\label{sec:generalization}
%% ====================================================================

\paragraph{Cross-tier coverage.}
The headline results are on Tier~0 (409 expert-curated instances); the cross-ontology Tier~1 (324{,}511 instances; coverage probe in Appendix~\ref{app:tier1}) and domain-specific Tier~2 (31{,}477 instances) are released as additive coverage. A 190-instance Level~2 probe over seven Tier~2 sources (BabelNet, FrameNet, Gene Ontology, SUMO, UMLS, Wikidata, YAGO-Full) confirms the formal pipeline transfers without measurable difficulty change: M4-direct accuracy is 100\% (Claude, GPT-5.2) and 94.7\% (Kimi-K2.5, DeepSeek-R1), and DeepSeek-R1's full $\{M_2,M_4\}\times\{\text{direct},\text{CoT}\}$ panel sits in the 93--98\% range (Appendix~\ref{app:tier2}). The benchmark's difficulty stays concentrated where Tier~0 placed it: at Level~3 and the rendering-robust metric.

\paragraph{Cross-benchmark ranking inversion.}
Evaluating the same panel on \textsc{DeFReasing}~\citep{allaway2025defreasing}, a short-form three-way defeasible-classification benchmark, exposes a 90+~pp architecture-dependent ranking inversion (Table~\ref{tab:defreasing}): the reasoning-optimized models score 0--1\% on \textsc{DeFReasing} (97--100\% of responses are empty or unparseable) yet 91--93\% on \textsc{DeFAb}~L2, while the instruction-tuned models score 59--60\% and 95\% respectively. No single defeasible-reasoning benchmark yields a stable capability ordering; \textsc{DeFAb}'s four-modality, two-strategy, three-level protocol is precisely a defense against this fragility.

\begin{table}[t]
\centering
\caption{Cross-benchmark comparison. \textsc{DeFReasing} is 3-way classification; \textsc{DeFAb}~L2 is formal candidate selection at M4 direct ($n=100$ each). ``Empty''/``Dec.\ fail'' are unparseable-response shares.}
\label{tab:defreasing}
\small
\begin{tabular}{@{}lcccccc@{}}
\toprule
& \multicolumn{3}{c}{\textbf{\textsc{DeFReasing} (3-way)}} & \multicolumn{3}{c}{\textbf{\textsc{DeFAb} L2 M4 direct}} \\
\cmidrule(lr){2-4}\cmidrule(lr){5-7}
\textbf{Model} & Acc & Macro-F1 & Empty & Acc & Dec.\ fail & $n$ \\
\midrule
GPT-5.2-chat       & 60.0\% & 0.59 & \phantom{0}0\% & 95.0\% & \phantom{0}1\% & 100 \\
Claude Sonnet 4.6  & 59.0\% & 0.58 & \phantom{0}1\% & 95.0\% & \phantom{0}1\% & 100 \\
DeepSeek-R1        & \phantom{0}1.0\% & 0.01 & 97\% & 91.0\% & \phantom{0}2\% & 100 \\
Kimi-K2.5          & \phantom{0}0.0\% & 0.00 & 100\% & 93.0\% & \phantom{0}5\% & 100 \\
\bottomrule
\end{tabular}
\end{table}

\paragraph{A fully disjoint domain: rules of engagement.}
A hand-authored Rules-of-Engagement (ROE) knowledge base, formally certified by \texttt{scripts/certify\_rts\_kb.py}, supplies six Level~3 seed conflicts whose vocabulary (military units, exclusion zones, mission codes) shares no surface overlap with the naturalistic tiers. The same four-model panel evaluated on these conflicts at M4 (Table~\ref{tab:rts_eval}) replicates the Tier~0 reasoning-vs-instruction divide on a fully disjoint domain, evidence that the measured capability gap is structural rather than vocabulary-bound. The domain also supports a closed-loop deployment scenario: an LLM-as-commander whose every order is checked against the symbolic ROE theory by the polynomial-time verifier. Across $72$ quiz-mode records (four models $\times$ three enforcement modes $\times$ six scenarios), every model achieves 100\% verifier compliance on the orders it admits (Table~\ref{tab:e4_quiz}, Figure~\ref{fig:e4_roe}); the discriminating signal is the correct-abstain rate. A companion jailbreak experiment confirms that the verifier-gated mode (B2) blocks 100\% of prohibited orders under four prompt-injection payloads that elicit up to 50\% prohibited orders in the trust-LLM mode (Appendix~\ref{app:roe}): because the verifier evaluates the symbolic theory rather than the model's text-belief about it, no payload can alter the derivation check.

\begin{table}[t]
\centering
\caption{ROE Level~3 evaluation on six seed conflicts, M4, direct+CoT pooled (12 evaluations/model). Wilson 95\% CI half-width $\pm 25$~pp at $n=12$ (descriptive). Total API cost across all four sweeps: \$0.67.}
\label{tab:rts_eval}
\small
\begin{tabular}{@{}lccl@{}}
\toprule
Model & Accuracy & Correct/Total & Dominant error class \\
\midrule
GPT-5.2-chat        & \textbf{75.0\%} & 9 / 12 & E1 (decoder failure, 3) \\
DeepSeek-R1         & 50.0\% & 6 / 12 & E1 (decoder failure, 5/6 under direct) \\
Claude Sonnet~4.6   & 41.7\% & 5 / 12 & E2 (derivation failure, 6/7 under CoT) \\
Kimi-K2.5           & 25.0\% & 3 / 12 & E1 (decoder failure, 9) \\
\bottomrule
\end{tabular}
\end{table}

\begin{table}[t]
\centering
\caption{Closed-loop ROE compliance, quiz mode ($n=72$). CA: correct-abstain count of six scenarios. Orders: admitted orders across the six scenarios. Verifier compliance: fraction of admitted orders passing the formal ROE check (N/A: no orders admitted). B0: trust-LLM; B1: audit-only; B2: verifier-gated re-prompt.}
\label{tab:e4_quiz}
\small
\begin{tabular}{@{}lccccc@{}}
\toprule
\textbf{Model} & Mode & Correct-abstain & Orders & Verifier compliance & Reprompts \\
\midrule
GPT-5.2-chat       & B0/B1/B2 & 4/6 & 6 & 100\% & 0 \\
Claude Sonnet 4.6  & B0/B1/B2 & 4/6 & 6 & 100\% & 0 \\
DeepSeek-R1        & B0/B2 & 5/6 & 3 & 100\% & 0 \\
DeepSeek-R1        & B1 & \textbf{6/6} & 2 & 100\% & 0 \\
Kimi-K2.5          & B0/B1/B2 & 6/6 & 0 & N/A & 0 \\
\bottomrule
\end{tabular}
\end{table}

\begin{figure}[t]
\centering
\includegraphics[width=\textwidth]{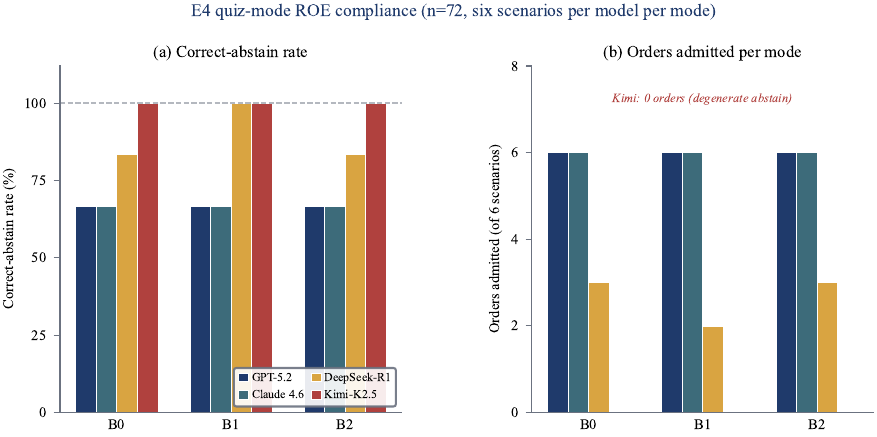}
\caption{Closed-loop ROE compliance ($n=72$, six scenarios per model per mode). (a)~correct-abstain rate by model and enforcement mode: DeepSeek-R1 reaches $6/6$ under audit-only (B1); GPT-5.2 and Claude plateau at $4/6$; Kimi reaches $6/6$ trivially via a zero-order policy. (b)~orders admitted per mode, showing Kimi's degenerate abstain-always policy and DeepSeek-R1's selective ordering. Every admitted order passes the formal verifier.}
\label{fig:e4_roe}
\end{figure}

A 10-instance-per-domain cross-environment pilot (Appendix~\ref{app:e5}) confirms that L2 formal abduction saturates at 100\% for GPT-5.2 and Claude across biology, legal, and materials, and that the ROE Level~3 ranking reorders relative to Tier~0, consistent with task-format rather than vocabulary effects.

\paragraph{Visual grounding (M5).}
The M5 modality replaces entity-grounding facts with images harvested from Wikidata~P18, VisualSem, and BabelNet, requiring a model to recognize an entity before applying defeasible rules (Appendix~\ref{app:m5details}). An open-VLM pilot on 150 M5-replace Level~2 instances (Table~\ref{tab:m5_results}) shows that vision--language models inherit and amplify the decoder brittleness seen in the text panel: Qwen2.5-VL-7B parses and solves 40.0\% of instances, while the larger Qwen2.5-VL-72B-AWQ fails to emit a single parseable hypothesis (100\% decoder failure), mirroring Kimi-K2.5's text-side collapse. The modality is fully operational end-to-end; the closed-tier (GPT-5.2, Claude) M5 sweep is queued on the same harness.

\begin{table}[t]
\centering
\caption{M5 visual-grounding pilot (M5-replace, Level~2, direct, $n=150$). Accuracy and decoder-failure rate for the open VLM tier served via vLLM on CURC Alpine. Closed-tier (GPT-5.2, Claude) M5 evaluation is in progress.}
\label{tab:m5_results}
\small
\begin{tabular}{@{}lccc@{}}
\toprule
\textbf{Model} & Accuracy & Decoder failure & $n$ \\
\midrule
Qwen2.5-VL-7B-Instruct      & \textbf{40.0\%} & 60.0\%  & 150 \\
Qwen2.5-VL-72B-Instruct-AWQ & \phantom{0}0.0\% & 100.0\% & 150 \\
\bottomrule
\end{tabular}
\end{table}

%% ====================================================================
\section{\textsc{DeFAb-Hard}: A Verifier-Backed Difficulty Frontier}\label{sec:defabhard}
%% ====================================================================

The Tier~2 and constrained-output results show that current frontier models have largely solved L2 formal abduction and are bottlenecked on L3 defeater construction. \textsc{DeFAb-Hard} is not a new benchmark but a pre-registered difficulty-stratified variant of \textsc{DeFAb}, generated by the same pipeline (\texttt{scripts/generate\_defab\_hard.py}) and scored by the same polynomial-time verifier, that pushes specifically on the three L3 dimensions where models fail: H1 high predicate novelty (35 instances, synthetic vocabulary disjoint from Tier~0), H2 deep derivation chains (100 instances, depth $\geq 5$), and H3 simultaneous multi-anomaly resolution (100 instances, 2--3 anomalies). All 235 instances are checked into \texttt{instances/defab\_hard/} and solved at 100\% accuracy in $<\!50~\mu$s each by the polynomial-time verifier, so any model gap is structural, not computational.

The complete four-model panel (M4, direct+CoT; Table~\ref{tab:defabhard_results}, Figure~\ref{fig:defab_hard}) sharpens the central thesis. Direct prompting collapses universally ($\leq 10\%$ on every axis except DeepSeek-R1's H2). CoT recovery is architecture-dependent: DeepSeek-R1 reaches 98.9\% on H2 deep-chain CoT (essentially matching the symbolic solver) and 82.1\% on H1, GPT-5.2 recovers to 74--80\% on H1/H2, while Kimi and Claude remain near zero. The reasoning-vs-instruction spread widens from $\sim$4:1 at Tier~0 ($65\%$ vs $16\%$) to $\sim$36:1 on \textsc{DeFAb-Hard} (pooled $53.3\%$ vs $1.5\%$). The H3 multi-anomaly axis is the universal bottleneck: even DeepSeek-R1 drops from $98.9\%$ (H2) to $59.0\%$ (H3), confirming that simultaneous-anomaly resolution is qualitatively harder than depth alone, and that ample headroom remains above the current frontier.

\begin{table}[t]
\centering
\caption{\textsc{DeFAb-Hard} accuracy (M4) by axis and prompting strategy; symbolic solver is 100\% on all 235 instances. Pooled column averages all per-instance evaluations across H1/H2/H3 and direct/CoT.}
\label{tab:defabhard_results}
\small
\begin{tabular}{@{}lcccccc|c@{}}
\toprule
\textbf{Model} & H1 dir. & H1 CoT & H2 dir. & H2 CoT & H3 dir. & H3 CoT & Pooled \\
\midrule
DeepSeek-R1        & 10.3\% & \textbf{82.1\%} & 54.3\% & \textbf{98.9\%} & 2.4\% & 59.0\%          & $\mathbf{53.3\%}$ \\
GPT-5.2-chat       & \phantom{0}0.0\% & 74.3\% & 20.2\% & 79.8\% & 3.0\% & \textbf{54.5\%} & $39.1\%$ \\
Kimi-K2.5          & \phantom{0}0.0\% & 17.1\% & \phantom{0}0.0\% & \phantom{0}3.0\% & 0.0\% & \phantom{0}9.1\% & $3.8\%$ \\
Claude Sonnet~4.6  & \phantom{0}0.0\% & \phantom{0}5.9\% & \phantom{0}0.0\% & \phantom{0}4.1\% & 0.0\% & \phantom{0}1.0\% & $1.5\%$ \\
\bottomrule
\end{tabular}
\end{table}

\begin{figure}[t]
\centering
\includegraphics[width=\linewidth]{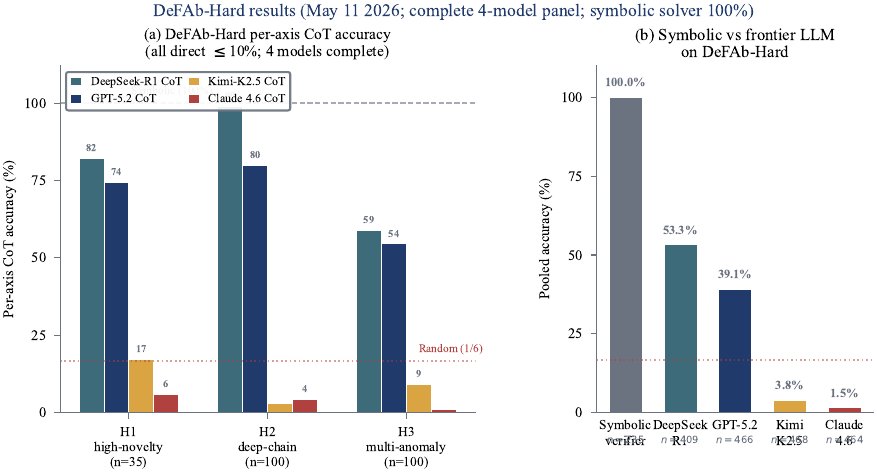}
\caption{\textsc{DeFAb-Hard} four-model panel (M4). (a)~per-axis accuracy across H1 high-novelty, H2 deep-chain, and H3 multi-anomaly under direct vs.\ CoT prompting. (b)~pooled LLM accuracy against the symbolic solver's $100\%$ baseline (235 instances). The dotted line marks chance-level rendering-robust accuracy ($1/6 \approx 16.7\%$).}
\label{fig:defab_hard}
\end{figure}

A response-level audit of every wrong answer in the GPT-5.2 and Claude panels resolves the Claude collapse to a specific cognitive error rather than a decoder or verification artifact (Figure~\ref{fig:failure_mode}, Appendix~\ref{app:errortax}). Under direct prompting, both models systematically respond with the \emph{antecedent fact} of the anomalous derivation (e.g., \texttt{bird(opus).} when asked for the defeater that blocks \texttt{flies(opus)}, or \texttt{xekijef(tipono).} on a synthetic H2 instance) instead of generating a defeater rule: the output is well-formed Datalog but is a fact, not a rule, so the verifier rejects it. Claude reproduces this error on every direct-prompt H1/H2/H3 evaluation in the panel. Under CoT prompting, Claude produces lengthy step-by-step reasoning that correctly traces the anomalous derivation but never closes with a parseable defeater, so $94\%$ of its CoT evaluations fail at final-answer extraction despite acceptable intermediate reasoning. GPT-5.2's CoT recovery to $74.3\%$ on H1 is, by the same audit, primarily a final-answer-format property: the CoT scaffold closes with a parseable rule where direct prompting reproduces the antecedent-fact error. This audit reinforces the constrained-output finding of \S\ref{sec:ablations}: a large fraction of the apparent reasoning gap is a format-compliance gap that a verifier-gated training loop is positioned to close.

\begin{figure}[t]
\centering
\includegraphics[width=0.82\textwidth]{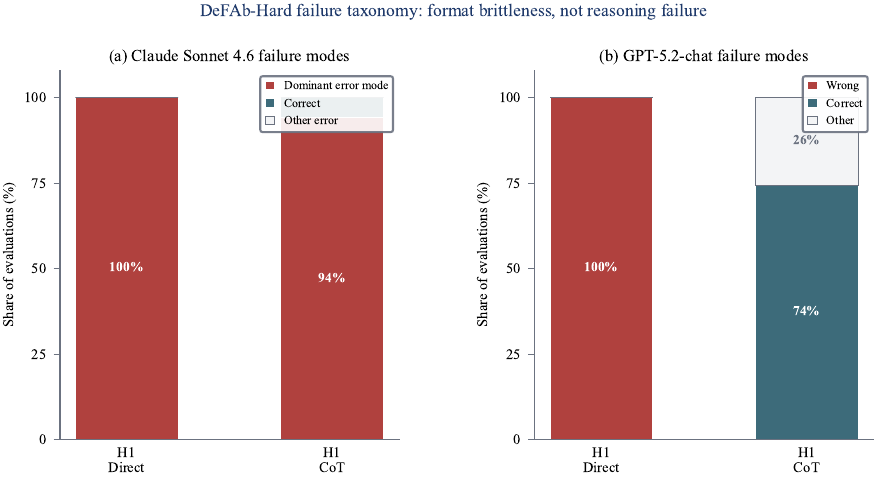}
\caption{\textsc{DeFAb-Hard} failure-mode taxonomy on the H1 high-novelty axis (representative of all three axes). (a)~Claude~Sonnet~4.6: under direct prompting $100\%$ of evaluations fail with the antecedent-fact error (the model returns \texttt{bird(opus).} instead of a defeater rule); under CoT, Claude correctly traces the anomaly chain but $94\%$ of evaluations fail at final-answer extraction. (b)~GPT-5.2-chat: CoT recovery to $74.3\%$ correct is a final-answer-format property, while direct prompting produces the same antecedent-fact error as Claude.}
\label{fig:failure_mode}
\end{figure}

%% ====================================================================
\section{\textsc{CONJURE}: A Kernel-Verified Transformative-Creativity Track}\label{sec:conjure}
%% ====================================================================

\textsc{DeFAb}, including its \textsc{DeFAb-Hard} variant, measures \emph{combinational} defeasible reasoning: the resolution to an anomaly is always a new arrangement of predicates that already exist in the theory. A complementary regime, and the one defeasible reasoning was originally designed to model after Lakatos~\cite{lakatos1976proofs}, is \emph{transformative} creativity, where the resolution to a counterexample is a wholly new predicate that did not previously exist in the corpus. We release \textsc{CONJURE}, a kernel-verified variant of \textsc{DeFAb} targeting exactly this regime, built on Lean~4 and Mathlib~\cite{mathlib2020} and shipped alongside the propositional tiers in the same dataset repository. \textsc{CONJURE} is the one component of the release whose gold answers are not strings to be matched but Lean~4 definitions the kernel did not previously contain: every accept verdict is a kernel-elaboration certificate, with no human grader, no LLM-as-judge, and no learned reward model in the loop. AlphaProof~\cite{alphaproof2025} showed that a Lean-verified outcome loop scales for combinational reasoning at competition level; \textsc{CONJURE} asks the analogous question one regime up, where no finite candidate set contains the answer.

\paragraph{Formal apparatus.} A parent theorem has the shape $T : \forall x{:}\iota.\, H_1(x)\wedge\cdots\wedge H_n(x)\to C(x)$ over a parameter telescope $\iota$ with closed $\mathrm{Prop}$-valued hypotheses. A \textsc{CONJURE} instance is a quadruple $I=(T,k,\alpha,\Sigma)$: parent $T$; index $k$ of the ablated hypothesis $H_k$; a closed term $\alpha$ certifying $\neg(T[H_k\!\mapsto\!\top])$ (without $H_k$ the patched theorem is provably false); and a sibling family $\Sigma=\{(T_i,\pi_i)\}$ of Mathlib theorems with kernel-verified proofs. A submission is a pair $(D,r^*)$: a fresh declaration $D=(\texttt{def}\;P\,(x{:}\tau_1)\cdots:\mathrm{Prop}:=\phi)$ with $P\notin\mathrm{Sym}(\mathcal{M}\cup\Sigma)$, and a proof term $r^*:T[H_k\!\mapsto\!P]$ claiming the patched parent under the invented predicate. A submission is accepted as a \emph{conservative novel-concept resolution} iff, in the kernel extended by $D$, four obligations hold: (WT)~$\phi$ is well-typed; (PR)~$r^*$ proves the patched parent; (CV)~every sibling proof $\pi_i:T_i$ still type-checks after $D$ enters the environment (the invented concept does no collateral damage); and (NV)~the three-tier Conjure Novelty Specification (CNS) reports no fired guard. Each CNS tier is a fixed, publishable Lean script that uses no Mathlib search and emits a kernel certificate: \textbf{NV-1} (syntactic) decides $\phi\not\equiv_{\mathrm{def}}M_j$ for every memo entry $M_j$ via \texttt{Meta.isDefEq}; \textbf{NV-2} (logical) checks that a fixed propositional sweep cannot prove $\forall x.\,P(x)\!\leftrightarrow\!M_j(x)$, catching cosmetic wrappers such as $P\wedge\top$ and conjunct reorderings; and \textbf{NV-3 / CIRC} (non-circular) checks that the conclusion was not smuggled in as the hypothesis. An accept that fails NV-1 or NV-2 is labelled \textbf{retrieval}; one that fails CIRC is labelled \textbf{circular}; one that passes all three is labelled \textbf{novel}. The verifier $V_\tau$ decides the full predicate using $k+1+|\Sigma|+2q+1$ kernel-elaboration calls plus $q$ \texttt{isDefEq} meta calls ($q=|\mathcal{M}_I|$), each bounded by a per-call timeout $\tau$, so verification is polynomial-time and judge-free, and re-execution on the pinned toolchain and Mathlib commit is bit-identical regardless of the executing node.

\paragraph{Difficulty axes.} \textsc{CONJURE} selects four points in the two-parameter family $(k,|\Sigma|)$, each isolating a distinct cognitive operation (Table~\ref{tab:conjure-axes}). C1 (concept invention; $k{=}1,|\Sigma|{=}0$) rewards producing one fresh predicate that restores provability. C2 (robust conservativity; $k{=}1,|\Sigma|\in[3,10]$) adds the no-collateral-damage discipline that distinguishes lemma incorporation from monster-barring. C3 (cross-theorem generalization; $k\in[2,5],|\Sigma|{=}0$) requires a single predicate to discharge a bundle of parents that share a Lakatos family. C4-OPEN (open conjectures) inverts the contract: the model must invent both the hypothesis predicate and the parent statement, with kernel-checked obligations ruling out the trivial parent and the vacuous hypothesis.

\begin{table}[t]
\centering
\small
\resizebox{\textwidth}{!}{%
\begin{tabular}{llll}
\toprule
Axis & Parameters & Cognitive operation & Verifier cost \\
\midrule
C1 (concept invention)        & $k{=}1,\;|\Sigma|{=}0$        & invent one restoring predicate            & $(2+2q+1)\tau$ \\
C2 (robust conservativity)     & $k{=}1,\;|\Sigma|\in[3,10]$  & invent without breaking siblings          & $(|\Sigma|{+}2+2q+1)\tau$ \\
C3 (cross-theorem generalization) & $k\in[2,5],\;|\Sigma|{=}0$ & one predicate for a bundle of parents     & $(k{+}1+2q+1)\tau$ \\
C4-OPEN (open conjectures)     & N/A                          & invent hypothesis \emph{and} parent       & $3\tau$ \\
\bottomrule
\end{tabular}}
\caption{The four \textsc{CONJURE} axes selected from the $(k,|\Sigma|)$ family. $\tau$ is the per-call kernel timeout, $q=|\mathcal{M}_I|$ the memo-set size; cost is linear in the structural parameter. Each axis ships a fully formalized seed instance (Euler's polyhedral formula for C1, a Fubini-like sibling family for C2, a uniform-convergence bundle for C3) demonstrating the contract that future instance modules inherit.}
\label{tab:conjure-axes}
\end{table}

\paragraph{Released corpus.} The live \textsc{CONJURE} corpus (\texttt{instances/conjure\_phase4\_v1.json}, schema 2.1) ships 560 kernel-verified, memo-covered instances across eight Lakatos families (Euler--Poincar\'e, bounded variation, Fubini, Sylow, uniform convergence, Riesz--Hausdorff, Stokes, and Banach fixed-point), with an axis split of 191 C1, 231 C2, 88 C3, and 50 C4-OPEN. A deterministic $70/30$ public/hidden partition (\texttt{conjure\_phase4\_v1.split.json}, fixed by corpus SHA-256) releases 393 instances publicly and withholds 167 for contamination-resistant evaluation. The release additionally includes a 15-instance synthetic contamination twin (\texttt{conjure\_synth\_v1.json}), in which every project-declared structural identifier is rewritten by a seeded rename to a content-free name and every doc comment is stripped while the contract API and natural-language description are preserved verbatim, and 5 adversarial first-guess instances (\texttt{conjure\_adv\_v1.json}) that ship a secondary counterexample on which the textbook first-guess hypothesis itself holds, forcing the model strictly past the canonical resolution. The Lean verifier ships as a long-lived Lean~4 driver under \texttt{lean/BlancMath/Conjure/} (one namespace root per axis) plus a five-module Python harness (\texttt{submission\_parser}, \texttt{lean\_subprocess\_verifier}, \texttt{tight\_adjudicator}, \texttt{conjure\_novelty}, and the offline \texttt{rescore\_conjure\_pilot}); the entire CNS trust base is the one short \texttt{conjure\_novelty.py} module, and an adversarial test suite exercises every published attack vector against the real kernel.

\paragraph{Pilot operational status.} \textsc{CONJURE} is released with a deliberately honest pilot rather than a headline capability claim. We ran a single public frontier model (\texttt{claude-opus-4-7}, Azure AI Foundry, $k{=}1$ best-of-one under an 8K-token hard cap) on a scored 26-instance Phase-4.2 slice and rescored every saved submission offline through the real Lean kernel against the pinned Mathlib build (roughly 3--4~s per call). The kernel accepted $8/26 = 30.8\%$ of submissions (3/10 C1, 3/7 C2, 2/9 C3); the principled three-tier CNS verdict then split those eight accepts into seven retrieval, one circular, and zero novel (Table~\ref{tab:conjure-pilot}, Figure~\ref{fig:conjure_funnel}). The $0/26$ genuinely-novel rate is not a disappointment but the designed falsification target: every stage of the funnel is a hard, kernel-certified filter, and any post-baseline movement in the novel count that the three guards cannot dismiss is, by construction, a verifiable signal of mathematical invention. Three caveats sit above the number and are stated plainly: at $n{=}26$ the Wilson 95\% interval on the accept rate is $[16.5,49.9]\%$ and on the novel rate $[0.0,12.9]\%$, both of which must contract by an order of magnitude before per-axis claims are meaningful; the canonical contracts are project-authored (blind outside-authored contracts are future work); and a synthetic-twin contamination check on the 15-instance pilot was underpowered to detect a signal at $n{=}15$, $k{=}1$. \textsc{CONJURE} is therefore released as a falsification instrument with an operational verifier and a public corpus, not as a measurement of frontier creativity.

\begin{table}[t]
\centering
\small
\begin{tabular}{lccc}
\toprule
Category & Count\,/\,26 & Rate & 95\% Wilson CI \\
\midrule
Tight-accepted (any predicate)                         & $8$ & $30.8\%$           & $[16.5,\,49.9]\%$ \\
\quad of which \textbf{retrieval} (NV-1 or NV-2 fires) & $7$ & $26.9\%$           & $[13.7,\,46.1]\%$ \\
\quad of which \textbf{circular} (CIRC fires)          & $1$ & $\phantom{0}3.8\%$ & $[\phantom{0}0.7,\,18.9]\%$ \\
\quad of which \textbf{novel} (no guard fires)         & $0$ & $\phantom{0}0.0\%$ & $[\phantom{0}0.0,\,12.9]\%$ \\
\bottomrule
\end{tabular}
\caption{\textsc{CONJURE} pilot on \texttt{claude-opus-4-7} (26-instance Phase-4.2 slice, $k{=}1$ best-of-one, 8K-token hard cap, offline kernel rescore). The single circular label is the model writing the FTC conclusion verbatim as its hypothesis; the seven retrieval labels are each witnessed by a Lean-level audit reason naming the memo entry the invented predicate collapsed onto. The genuinely-novel count of zero is the falsification target the rest of the track is calibrated against.}
\label{tab:conjure-pilot}
\end{table}

\begin{figure}[t]
\centering
\includegraphics[width=0.80\textwidth]{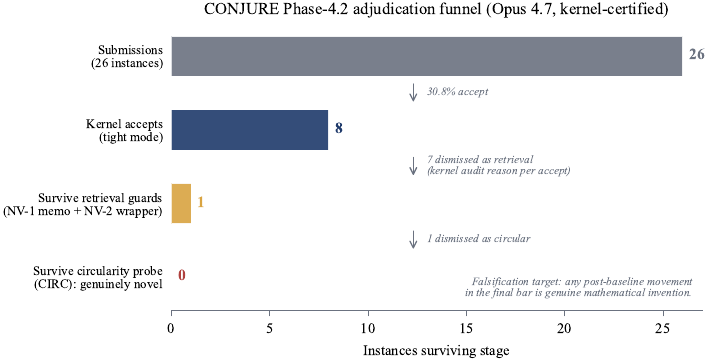}
\caption{The \textsc{CONJURE} pilot adjudication funnel (\texttt{claude-opus-4-7}, 8K-token hard cap). Every stage is a kernel-certified filter: 26 hint-stripped submissions yield 8 tight-mode kernel accepts ($30.8\%$); the principled CNS verdict dismisses 7 as retrieval (memo match or cosmetic wrapper) and 1 as circular, each with a Lean-level audit reason, leaving 0 genuinely novel. The final bar is the falsification target.}
\label{fig:conjure_funnel}
\end{figure}

%% ====================================================================
\section{DeFAb as a Finetuning Substrate}\label{sec:finetuning}
%% ====================================================================

The failure cascade documented in Section~\ref{sec:evaluation} establishes the need for training, not merely better evaluation, and the DeFAb-Hard audit of \S\ref{sec:defabhard} localizes much of the gap to a format-compliance bottleneck that a verifier-gated loop is positioned to close. This section documents the substrate DeFAb provides for it. Trained-model demonstrations (DPO/RLVR runs with downstream Level~3 deltas) are reserved for follow-on work; what we release now is the full training infrastructure and a pre-registered protocol, with the verifier as the common prerequisite. The released code under \texttt{experiments/finetuning/} implements supervised fine-tuning (\texttt{train\_sft.py}), direct preference optimization (\texttt{train\_dpo.py}), group-relative policy optimization (\texttt{train\_grpo.py}), and a verifier-in-the-loop RLHF variant (\texttt{train\_rlhf\_vitl.py}), together with preference-data builders for the naturalistic and RTS tiers and a battery of analysis scripts (level-transfer, reward-fidelity, margin-effect, curriculum, scaling-projection). The same polynomial-time verifier that scores every evaluation instance also serves as the exact reward function, eliminating the reward-model approximation error that undermines recent RLVR systems~\citep{ouyang2022training, zheng2025verifybench, rlbff2025}. Figure~\ref{fig:finetuning} (Appendix~\ref{app:finetuning}) summarizes the four downstream uses of this single oracle.

\paragraph{Preference signal.} For DPO, two responses $h_1, h_2$ to the same Level~3 instance receive ordered verifier scores $\Score(h_1) \geq \Score(h_2)$ (Definition~\ref{def:score}); because the score is graded on $\{0, 0.25, 0.5, 0.75, 1.0\}$ rather than binary, it admits a margin-weighted variant in which $m = \Score(h_1) - \Score(h_2)$ scales the preference loss, so a full-conservative-vs-non-conservative pair contributes a stronger gradient than a non-conservative-vs-unresolved pair. The same graded scale yields dense partial-credit reward for the otherwise sparse construction task, addressing the reward-sparsity problem that makes Level~3 hard to train against with a binary signal.

\paragraph{Verifiable reward.} For GRPO~\citep{shao2024deepseekmath}, the verifier serves directly as the reward function with group-relative advantage $A_i = \Score(h_i) - \frac{1}{G}\sum_j \Score(h_j)$, computable in microseconds per rollout, so reward evaluation never bottlenecks the training loop. Unlike verifier-LLM approaches where two verifiers may disagree on 15--20\% of instances~\citep{wen2025aimecot}, DeFAb's verifier is deterministic, model-independent, and exact: there is no sampling noise, no approximation, and no inter-verifier disagreement to correct for. The verifier-gated re-prompt loop used for the LLM-as-commander policy in the RTS track (\S\ref{sec:generalization}) is the inference-time analogue of the same oracle.

\paragraph{Curriculum.} The three-level hierarchy supplies a natural curriculum---SFT on Level~1/2 to install the output format, DPO on Level~2 to sharpen rule selection, and GRPO on Level~3 (and \textsc{DeFAb-Hard}) for conservative construction---and the released \texttt{spectral\_lora.py} provides a spectral LoRA initialization for parameter-efficient adaptation. The pre-registered evaluation measures downstream rendering-robust and Level~3 accuracy against the frozen Tier~0 set, so any reported lift is directly comparable to the baselines in this paper. Empirical validation of these lifts is the subject of follow-on work; the contribution here is the substrate that makes the verifier-grounded training of defeasible reasoning reproducible and approximation-free.

%% ====================================================================
\section{Search, Self-Play, and Adversarial-Debate Infrastructure}\label{sec:games_debate}
%% ====================================================================

The same polynomial-time verifier that scores evaluation instances and grounds the finetuning substrate also defines a sequential decision problem, which the release exposes as two further pieces of infrastructure: a self-play search environment and an adversarial-debate protocol. As with the finetuning substrate, we are deliberate about the line between what ships and what is measured: the search environment is released with a pre-registered evaluation but no trained-policy numbers yet, and the debate protocol is released with a symbolic validation pilot but no frontier-model debate results yet. We describe both so the released code is documented and the pre-registration is on record, and we invent no empirical lifts.

\paragraph{Defeater construction as a self-play search environment.} The released code under \texttt{src/blanc/games/} and \texttt{experiments/alphazero/} casts Level~3 defeater construction as a single-agent Markov decision process (\texttt{conflict\_game.py}) whose states are partial defeasible theories, whose actions extend the theory with candidate rules, and whose terminal reward is the graded verifier score $\Score$ of Definition~\ref{def:score}. A dual-headed policy/value network (\texttt{network.py}) guides PUCT Monte~Carlo tree search (\texttt{neural\_mcts.py}) over this space, and an AlphaZero-style expert-iteration loop (\texttt{expert\_iteration.py}) collects self-play trajectories and promotes a new checkpoint only when it wins more than $55\%$ of held-out evaluation conflicts against the current best player. Because the verifier is exact and microsecond-fast, it serves as a perfect leaf evaluator with no learned-reward approximation, the property that AlphaProof~\cite{alphaproof2025} exploited for combinational reasoning at competition level. The pre-registered evaluation reports mean graded score, win rate against a symbolic-MCTS baseline, and acceptance-gate promotion rate on Tier~0 Level~3 and on \textsc{DeFAb-Hard} Level~3; those runs are queued and no results are claimed here. The released artifact is the environment, the network, the search, and the training loop, all scored by the same oracle as the rest of the benchmark.

\paragraph{Adversarial defeasible debate.} A complementary protocol (\texttt{src/blanc/debate/}) turns evaluation into a competitive game: a proposer and a challenger each run MCTS over the derivation space of a shared theory, and the Author Algorithm permutes one agent's proof tree into challenges the other must defend, producing robustness, grounding, and creativity signals ($P_{\mathrm{rob}}$, $P_{\mathrm{grd}}$, $P_{\mathrm{cre}}$) that a single accuracy number cannot. Prior to any LLM-agent integration we validated the full MCTS/Author-permutation/scoring pipeline with a symbolic proposer/challenger pilot on Tier~0 Level~3 biology instances (May~2026): the proposer searches the intact ablated theory while the challenger searches a theory with one defeasible rule dropped, asymmetrically handicapping it. The proposer wins every evaluated instance ($P_{\mathrm{rob}} = P_{\mathrm{grd}} = P_{\mathrm{cre}} = 1.0$ across $K=3$ rounds each), which is the expected structural outcome and confirms only that (i)~MCTS finds stronger derivations on the intact theory, (ii)~the author-permutation asymmetry correctly handicaps the challenger, and (iii)~the reward and scoring pipeline operates end to end. This is a pipeline-validation result, not a measurement of model capability; the LLM-in-the-loop configuration, in which each agent's proposals are generated by a frontier model and adjudicated by the verifier, is released as infrastructure and reserved for follow-on evaluation. Debate instances generated this way are themselves verifier-scored preference pairs, so the debate protocol and the finetuning substrate compose: adversarially mined conflicts can feed the DPO/GRPO loop above.

%% ====================================================================
\section{Discussion}\label{sec:discussion}
%% ====================================================================

The Tier~0 L3 set has mean predicate novelty $\Nov^* = 0.14$ and a single anomaly per instance, the easiest possible regime for belief revision (simple monster-barring with existing predicates). That models fail even this easy regime is the diagnostic: DeepSeek-R1's 92.9\% L3 CoT, the best observed Tier~0 result, drops to 23.5\% under rendering-robust evaluation, and no model achieves $>$65\% under any pooled condition. The \textsc{DeFAb-Hard} pilot (\S\ref{sec:defabhard}, Figure~\ref{fig:difficulty}) populates the harder regions the Tier~0 release leaves empty---H1 high novelty, H2 deep chains, H3 multi-anomaly---and shows the frontier is genuinely far from saturated: the reasoning-vs-instruction spread widens to $\sim$36:1 and the H3 multi-anomaly axis caps even the strongest model at $59.0\%$. The release ships the 235-instance pilot with pre-registered targets for the full extension (H1 target 1{,}000; H2 100 per depth tier; H3 100).

A key methodological observation: the $+56$ to $+79$~pp CoT effect on L3 exceeds the difference between any two models, so abductive-reasoning benchmarks must report by prompting strategy rather than collapsing to one headline number. Further limitations: automated L3 generation at scale is constrained by exception-edge sparsity in source KBs (motivating DeFAb-Hard), and the synthetic control mitigates but does not eliminate structural memorization.

%% ====================================================================
\section{Conclusion and Dataset Access}\label{sec:conclusion}
%% ====================================================================

\textbf{Evaluative role.} DeFAb supports three claim classes: (i)~capability-mismatch claims bounded by the rendering-robust and L3 results modulo $\Delta_{\mathrm{synth}}$ and the 35-instance L3 sample; (ii)~prompting-strategy claims from the $\sigma \approx 36$~pp eight-cell variance; (iii)~contamination-resistance claims from the synthetic vocabulary-disjoint control, limited to the predicate-name level.

We have introduced \textsc{DeFAb}, a verifiable benchmark that exposes a mismatch between current foundation models and the belief-revision operations defeasible abduction requires. The 7.8--23.5\% rendering-robust accuracy, the high decoder-failure rates, the 36~pp prompting variance, and the universal narrative collapse together constitute evidence that current models do not reliably internalize these operations, even under the easiest difficulty regime. The polynomial-time verifier that powers the benchmark additionally provides the exact reward infrastructure for future verifier-backed training. The dataset, pipeline, and evaluation harness are released under the MIT license at \url{https://huggingface.co/datasets/PatrickAllenCooper/DeFAb} with Croissant~1.0 + RAI~1.0 metadata, mirrored from the public source repository \url{https://github.com/PatrickAllenCooper/blanc}. Tier~0 is frozen at v1.0; additional tiers are additive. Source KB licenses are in the Datasheet (Appendix~\ref{app:datasheet}).

\bibliographystyle{plainnat}
\bibliography{references}

%%%%%%%%%%%%%%%%%%%%%%%%%%%%%%%%%%%%%%%%%%%%%%%%%%%%%%%%%%%%
%%  APPENDICES
%%%%%%%%%%%%%%%%%%%%%%%%%%%%%%%%%%%%%%%%%%%%%%%%%%%%%%%%%%%%

\appendix

%% =================================================================
\section{Formal Definitions}\label{app:definitions}
%% =================================================================

\subsection{Defeasible Theories}

\begin{definition}[Defeasible Theory]\label{def:deftheory}
A defeasible theory is a quintuple $\D = (F, \Rs, \Rd, \Rdf, {\succ})$ where:
\begin{enumerate}[label=(\roman*)]
    \item $F \subseteq \HB$ is a finite set of facts;
    \item $\Rs$ is a finite set of strict rules: $r_s\colon b_1, \ldots, b_n \strictto h$;
    \item $\Rd$ is a finite set of defeasible rules: $r_d\colon b_1, \ldots, b_n \defto h$;
    \item $\Rdf$ is a finite set of defeaters: $r_{df}\colon b_1, \ldots, b_n \defeatto \neg h$; and
    \item ${\succ} \;\subseteq (\Rd \cup \Rdf) \times (\Rd \cup \Rdf)$ is an acyclic superiority relation.
\end{enumerate}
\end{definition}

\begin{definition}[Defeasible Derivation]\label{def:defderiv}
$+\partial q$ (defeasible provability) holds iff: (1) $+\Delta q$; or (2) all of:
\begin{enumerate}[label=(\alph*)]
    \item $-\Delta \comp{q}$;
    \item $\exists\, r \in \Rd$ with $C(r) = q$ and $\forall\, a \in A(r)$: $+\partial a$; and
    \item $\forall\, s \in \Rd \cup \Rdf$ with $C(s) = \comp{q}$: either $\exists\, a \in A(s)$: $-\partial a$; or $\exists\, t \in \Rd$ with $C(t) = q$, $\forall\, a \in A(t)$: $+\partial a$, and $t \succ s$.
\end{enumerate}
\end{definition}

\subsection{Conservativity and Anomalies}

\begin{definition}[Expectation Set]
$\Exp(\D) = \{q \mid \D \pPartial q\}$.
\end{definition}

\begin{definition}[Anomaly]\label{def:anomclass}
A ground literal $\alpha$ is a defeasible anomaly if $\D \pPartial \comp{\alpha}$ and $\D \not\pDelta \comp{\alpha}$.
\end{definition}

\begin{definition}[Conservativity]\label{def:conserv}
A resolution $(r, \Gamma)$ of anomaly $\alpha$ with respect to $\D$ is conservative if:
\[
    \Exp(\D \cup \{r\} \cup \Gamma \cup \{\alpha\}) \supseteq \Exp(\D) \setminus \{\comp{\alpha}\}.
\]
\end{definition}

\begin{definition}[Revision Distance]\label{def:revdist}
$d_{\mathrm{rev}}(\D, \D') = |\D' \setminus \D| + |\Exp(\D) \setminus \Exp(\D')|$.
\end{definition}

\begin{definition}[Predicate Novelty]\label{def:novelty}
$\Nov(r, \Dm) = |\mathrm{pred}(r) \setminus \mathrm{pred}(\Dm)| / |\mathrm{pred}(r)|$.
\end{definition}

\begin{definition}[Graded Level-3 Score]\label{def:score}
For a candidate resolution $h$ of anomaly $\alpha$ in $\Dm$, the score $\Score(h, \Dm, \alpha) \in \{0, 0.25, 0.5, 0.75, 1.0\}$ is the largest threshold whose conditions $h$ satisfies, evaluated in order:
\begin{enumerate}[label=(\alph*), nosep]
\item $0.00$ --- unresolved: $\Dm \cup \{h\} \pPartial \comp{\alpha}$ (the anomaly persists) or $h$ is unparseable;
\item $0.25$ --- language-bias-violating: $h$ resolves $\alpha$ but uses constructs outside the defeasible-rule language (e.g., a bare fact rather than a rule);
\item $0.50$ --- non-conservative: $h$ resolves $\alpha$ but $\Exp(\Dm \cup \{h\} \cup \{\alpha\}) \not\supseteq \Exp(\D)\setminus\{\comp{\alpha}\}$ (some unrelated expectation is destroyed, Definition~\ref{def:conserv});
\item $0.75$ --- weak conservative: $h$ is conservative but lacks the superiority assertion needed to override the attacked default at full strength (an E5 outcome);
\item $1.00$ --- full conservative: $h$ is conservative and includes the required superiority, matching a gold hypothesis in $\Hgold$.
\end{enumerate}
Each test is decidable in polynomial time via the defeasible-derivation oracle (Theorem~\ref{thm:derivP}), so the entire graded score is computable in P.
\end{definition}

\subsection{Instance Generation}

\begin{definition}[Full-Theory Criticality]\label{def:fullcrit}
$\CritStar(\D, q) = \{e \in F \cup \Rs \cup \Rd \mid \D \setminus \{e\} \not\pPartial q\}$.
\end{definition}

\begin{definition}[Level~3 Instance Generation]\label{def:level3gen}
Given complete theory $\Dfull$ with $\Rdf \neq \emptyset$: (1) select $r^*$; (2) find $\alpha$ with $\Dfull \pPartial \alpha$ and $\Dfull \setminus \{r^*\} \pPartial \comp{\alpha}$; (3) form $\Dm$; (4) verify $\Dm \pPartial \comp{\alpha}$ and $\Dm \not\pDelta \comp{\alpha}$; (5) compute $\Rgold$.
\end{definition}

\begin{definition}[Structural Difficulty]\label{def:structdiff}
$\sigma(I) = (\ell,\; |\Supp(\D, q)|,\; |\Hgold|,\; \min_{h \in \Hgold} |h|,\; \Nov^*(I))$.
\end{definition}

%% =================================================================
\section{Proofs}\label{app:proofs}
%% =================================================================

\begin{proposition}[Conservativity of Strict Conversion]\label{prop:strict}
If $\partfunc \equiv s$, then $q \in \mathcal{M}_\Pi \iff \D_\partfunc \pDelta q$.
\end{proposition}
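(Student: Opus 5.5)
We prove the two directions separately, after fixing what $\D_\partfunc \pDelta q$ means. When $\partfunc \equiv s$, the theory is $\phi_\partfunc(\Pi) = (F_\partfunc, \Rs^{\partfunc}, \emptyset, \emptyset, \emptyset)$. Here $F_\partfunc$ consists of the bodiless clauses (ground facts) of $\Pi$, and $\Rs^{\partfunc}$ consists of the clauses with nonempty body, read as strict rules $b_1,\ldots,b_n \strictto h$. We work with the ground instantiation of $\Pi$ throughout. This is harmless for definite programs, since both $T_\Pi$ and strict derivation act on ground instances.

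We use the standard inductive definition of definite provability: $+\Delta q$ holds iff either $q \in F_\partfunc$, or there is $r \in \Rs^{\partfunc}$ with head $q$ such that $+\Delta a$ holds for every body atom $a$. The key observation is that $+\Delta$ mentions neither defeasible rules, defeaters, superiority nor negation. So $\D_\partfunc \pDelta q$ is exactly membership in the least set $S$ that contains $F_\partfunc$ and is closed under the strict rules. Because all literals involved are positive atoms, conflicting literals play no role.

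\textbf{($\Rightarrow$).} Write $\mathcal{M}_\Pi = \bigcup_n T_\Pi^n(\emptyset)$ by Kleene's theorem, using the continuity of $T_\Pi$ for definite programs. We induct on the least $n$ with $q \in T_\Pi^n(\emptyset)$.
\begin{itemize}
\item If $q$ enters via a bodiless clause, then $q \in F_\partfunc$, so $+\Delta q$.
\item Otherwise some ground clause $q \leftarrow b_1,\ldots,b_m$ has all $b_i \in T_\Pi^{n-1}(\emptyset)$. By the induction hypothesis $+\Delta b_i$ holds for each $i$, and the corresponding strict rule then yields $+\Delta q$.
\end{itemize}

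\textbf{($\Leftarrow$).} We induct on the height of the $+\Delta$ proof tree, which is finite by the inductive definition. A fact leaf $q \in F_\partfunc$ is a unit clause of $\Pi$, so $q \in T_\Pi(\emptyset) \subseteq \mathcal{M}_\Pi$. For a strict-rule step, the body atoms lie in $\mathcal{M}_\Pi$ by the induction hypothesis. Since $\mathcal{M}_\Pi$ is a fixpoint, $T_\Pi(\mathcal{M}_\Pi) \subseteq \mathcal{M}_\Pi$, and hence the head $q$ also lies in $\mathcal{M}_\Pi$.

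Equivalently, one can argue in one stroke that $\{q \mid \D_\partfunc \pDelta q\}$ and $\mathcal{M}_\Pi$ are both the least fixpoint of the same operator $X \mapsto F_\partfunc \cup \{h \mid (b_1,\ldots,b_n \strictto h) \in \Rs^{\partfunc},\ b_i \in X\}$, which coincides with $T_\Pi$.

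The main obstacle is definitional rather than technical. Definition~\ref{def:defderiv} spells out only $+\partial$, so the plan must state the standard Nute/Antoniou clauses for $+\Delta$ and confirm they reduce to the monotone operator above. The real content of that check is that the strict part of defeasible logic never consults attackers.
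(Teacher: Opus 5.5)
Your proposal is correct and follows essentially the same route as the paper. The paper's proof observes that with $\partfunc \equiv s$ there are no defeasible rules, that the strict fragment is isomorphic to $\Pi$, and that definite provability on it coincides with $\mathrm{lfp}(T_\Pi) = \mathcal{M}_\Pi$; your two inductions and your one-stroke least-fixpoint remark spell out that coincidence, including the standard $+\Delta$ clause that Definition~\ref{def:defderiv} leaves implicit.
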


\begin{proof}
When $\partfunc \equiv s$, $\Rd^{\partfunc} = \emptyset$, and the strict fragment is isomorphic to $\Pi$. The definite provability relation on the strict fragment coincides with $\mathrm{lfp}(T_\Pi) = \mathcal{M}_\Pi$.
\end{proof}

\begin{theorem}[Defeasible Derivation in P]\label{thm:derivP}
For propositional defeasible theories, deciding $\D \pPartial q$ is in P, computable in $O(|R| \cdot |F|)$ time~\citep{maher2001propositional}.
\end{theorem}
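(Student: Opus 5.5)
The plan is the standard fixpoint argument of \citet{maher2001propositional}. We decide, simultaneously for every literal $q$ over the finite propositional signature of $\D$, all four tags $\pm\Delta q$ and $\pm\partial q$. Each literal is resolved exactly once, so a worklist algorithm runs in time polynomial in $|\D|$.

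The first step handles definite provability. We compute $+\Delta$ as the least fixpoint of the strict fragment $F\cup\Rs$, exactly as in Proposition~\ref{prop:strict}. Forward chaining, with a counter per rule recording how many body literals are not yet proven, gives this in time linear in the total rule size. Every literal not reached is tagged $-\Delta$; this is sound because the strict fragment is a definite program, so failure equals non-membership in the least model.

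The second step simplifies the theory so that the conditions of Definition~\ref{def:defderiv} become local counting tests. Following Maher, we first eliminate the superiority relation and defeaters by the standard transformation, which preserves $\pm\partial$ on the original language and blows up size at most linearly. We then iterate the following operations. Whenever some $+\partial a$ is established, delete $a$ from every rule body. Whenever $-\partial a$ is established, delete every rule with $a$ in its body. A literal $q$ receives $+\partial q$ when one of two things holds: $+\Delta q$ holds, or $-\Delta\comp{q}$ holds, some defeasible rule for $q$ has an empty body, and no rule for $\comp{q}$ remains undefeated. A literal $q$ receives $-\partial q$ when $-\Delta q$ holds and either $+\Delta\comp{q}$ holds, every rule for $q$ has been deleted, or some rule for $\comp{q}$ has an empty body and is not overridden.

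Each deletion or body shrinkage touches each rule occurrence a bounded number of times. With per-literal indices of the rules that mention it, the total work is $O(|R|\cdot|F|)$ in the worst accounting used in the statement, where each of at most $|F|$ literal resolutions scans the $|R|$ rules.

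The main obstacle is correctness rather than complexity. We must show that the literals the loop leaves untagged at termination are exactly those for which neither $+\partial$ nor $-\partial$ is derivable, so that $\D\pPartial q$ iff $q$ is tagged $+\partial$. The plan is an induction on proof length in both directions.

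\emph{Soundness}: every tag the algorithm assigns corresponds to a finite derivation. This follows because each update mirrors a clause of Definition~\ref{def:defderiv}, or its strong negation for $-\partial$, applied to already-justified tags.

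\emph{Completeness}: every finite $+\partial$ proof is reproduced. Take a minimal counterexample by proof height and observe that all premises of its last step were tagged earlier, so the loop fires on $q$.

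Acyclicity of $\succ$ is what makes the superiority-removal transformation well-defined, and we invoke it there. If time is short, we simply cite \citet{maher2001propositional} for this correctness lemma, since the statement attributes the bound to that work.
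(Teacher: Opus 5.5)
The paper gives no argument beyond its citation to \citet{maher2001propositional}, so following Maher's simplification algorithm is the intended route, and your $\partial$-phase tagging rules do match his. The gap is in your first step. The paper adopts the finite-derivation proof theory of \citet{antoniou2001defeasible}, and your own soundness claim presupposes it. There, $-\Delta q$ needs a finite proof: $q\notin F$, and every strict rule for $q$ has a body literal \emph{already} tagged $-\Delta$. That is finite failure, i.e.\ the complement of the greatest fixpoint of the strict fragment. It is not the complement of the least model. The two differ on unsupported strict cycles, whose literals get neither $+\Delta$ nor $-\Delta$.

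Here is a counterexample. Take $F=\emptyset$, strict rules $s_1\colon y \strictto \neg p$ and $s_2\colon \neg p \strictto y$, and a defeasible rule $d\colon\ \defto p$ with empty body. Then $+\partial p$ hinges on $-\Delta\neg p$, which needs $-\Delta y$, which in turn needs $-\Delta\neg p$. No finite proof exists, so $\D\not\pPartial p$. Your Step~1 nonetheless tags $-\Delta\neg p$ and $-\Delta y$, and your loop then outputs $+\partial p$:
\begin{itemize}
\item directly under Definition~\ref{def:defderiv}, whose attackers are only defeasible rules and defeaters;
\item under the standard definition, after first tagging $-\partial\neg p$ via $d$ (which deletes $s_2$) and then $-\partial y$ (which deletes $s_1$).
\end{itemize}
So soundness fails, and the decision procedure returns false positives. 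The repair is Maher's: compute $-\Delta$ with the same deletion discipline you use for $-\partial$. That is, tag $-\Delta q$ once $q\notin F$ and all strict rules for $q$ have been deleted, so cycle literals stay untagged. Two consequences follow. Literals are then resolved at most once, not exactly once. And your completeness induction must run simultaneously over all four tags, not just over $+\partial$ proofs, since a $+\partial$ step consumes $-\Delta$ and $-\partial$ premises.

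Your complexity step is also unsupported as written. The number of literal resolutions is bounded by the number of distinct literals occurring in $\D$, not by $|F|$: in the example $F=\emptyset$, yet $-\Delta p$ is still tagged. What your per-literal indexing actually gives is Maher's bound: time linear in the size of the transformed theory, hence $O(|\D|)$ in the original, which is what establishes membership in P. State that, rather than trying to recover the $O(|R|\cdot|F|)$ form through a counting claim that does not hold.
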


\begin{theorem}\label{thm:critP}
$\CritStar(\D, q)$ is computable in $O(|\D|^2 \cdot |F|)$ time.
\end{theorem}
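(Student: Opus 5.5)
The plan is to compute $\CritStar(\D, q)$ by brute force, straight from Definition~\ref{def:fullcrit}. There is one candidate element $e$ for each member of $F \cup \Rs \cup \Rd$, so at most $|\D|$ candidates. For each candidate we build the ablated theory $\D \setminus \{e\}$ and ask the defeasible-derivation oracle whether $\D \setminus \{e\} \pPartial q$. The element $e$ is placed in $\CritStar(\D,q)$ exactly when the oracle says no. By definition this procedure returns the right set, so the whole argument is a running-time count.

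For a single candidate the cost splits into two parts.

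\textbf{Building the ablated theory.} Copying $\D$ without $e$, or marking $e$ deleted in a shared representation, takes $O(|\D|)$ time. If $e \in \Rd$ is removed, every superiority pair mentioning $e$ must also be removed so that $\succ$ stays inside $(\Rd\cup\Rdf)^2$. This restriction is another linear scan, and acyclicity is inherited. So the theory satisfies Definition~\ref{def:deftheory}.

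\textbf{Querying the oracle.} By Theorem~\ref{thm:derivP}, deciding $\D\setminus\{e\} \pPartial q$ takes $O(|R'|\cdot|F'|)$ time. Here $R' \subseteq \Rs\cup\Rd\cup\Rdf$ and $F' \subseteq F$, because ablation only shrinks the theory. Bounding $|R'| \le |\D|$ gives $O(|\D|\cdot|F|)$ for this query.

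Adding the two parts, one candidate costs $O(|\D| + |\D|\cdot|F|) = O(|\D|\cdot|F|)$, assuming $F\neq\emptyset$. If $F=\emptyset$, nothing is derivable by rules with nonempty bodies and the trivial case is handled separately. Summing over at most $|\D|$ candidates gives the claimed $O(|\D|^2\cdot|F|)$.

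The main obstacle is making sure the bound of Theorem~\ref{thm:derivP} really applies uniformly to each ablated theory, and is not an amortized bound tied to a preprocessed $\D$. I would address this by observing that the Maher-style algorithm is run from scratch on each $\D\setminus\{e\}$. Its linear-time guarantee is stated in terms of that theory's own size, which is at most the size of $\D$. The superiority relation has size $O(|\Rd\cup\Rdf|^2)$, but the cited algorithm absorbs it into the $|R|$ factor, so I would accept Theorem~\ref{thm:derivP} as given rather than re-deriving this.

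A secondary point is that the statement is about the propositional, ground case. For the function-free datalog fragment, $\D$ and $F$ should be read as the grounded theory, and the same argument then applies verbatim.
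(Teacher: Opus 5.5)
Your argument is correct and matches the paper's approach. The paper gives no in-text proof (it defers to its companion report), but its complexity accounting is exactly yours: at most $|\D|$ candidate ablations $\D\setminus\{e\}$, each decided by the $O(|R|\cdot|F|)$ oracle of Theorem~\ref{thm:derivP}, giving $O(|\D|^2\cdot|F|)$ in total. Your extra remarks on restricting $\succ$ and on $F=\emptyset$ go beyond what the paper records. The latter case is really a degeneracy of the stated bound (read it with $\max(1,|F|)$) rather than something needing separate handling.
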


\begin{corollary}\label{cor:cubic}
The full instance generation pipeline runs in $O(|\D|^3)$.
\end{corollary}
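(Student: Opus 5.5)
The plan is to decompose the generation pipeline into its constituent stages and bound each one using Theorem~\ref{thm:derivP} and Theorem~\ref{thm:critP}. The stages, in the order the pipeline runs them, are:
\begin{enumerate}[label=(\roman*)]
\item conversion $\phi_\partfunc(\Pi)$;
\item computation of the target set, i.e.\ the literals $q$ with $\D \pPartial q$;
\item computation of $\CritStar(\D,q)$ for the chosen target;
\item ablation of a critical element and injection of $k=5$ distractors;
\item verification of the gold hypothesis, i.e.\ checking $\Dm \cup \{h\} \pPartial q$, minimality, and for Level~3 the conditions of Definition~\ref{def:level3gen} together with conservativity (Definition~\ref{def:conserv}).
\end{enumerate}
Throughout, I use $|R| \le |\D|$ and $|F| \le |\D|$. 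I treat $k$ and the candidate-set size $|\Hcand|$ as constants, since Table~\ref{tab:structstats} fixes them at about six.

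Stage (i) is a single pass applying $\partfunc$ clause by clause, so it costs $O(|\Pi|) = O(|\D|)$. Stage (iv) is likewise linear. Stage (iii) is $O(|\D|^2|F|) \subseteq O(|\D|^3)$ directly by Theorem~\ref{thm:critP}.

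For stage (ii), I would not call the derivation oracle once per literal. Instead I would use the fact that the Maher-style algorithm behind Theorem~\ref{thm:derivP} computes the full conclusion set $\Exp(\D)$, and all $\pm\Delta$, $\pm\partial$ tags, in one run of $O(|R|\cdot|F|) \subseteq O(|\D|^2)$ time. Even the naive alternative, one oracle call per literal over the $O(|\D|)$ literals occurring in $\D$, costs only $O(|\D|^3)$.

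Stage (v) for Levels~1/2 consists of one derivation check per candidate, i.e.\ $O(|\Hcand|)$ oracle calls, which is $O(|\D|^2)$. Minimality for single-element hypotheses ($\min|h|=1$) amounts to checking that removing $h$ breaks the derivation of $q$, which is one more oracle call.

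The main obstacle is the Level~3 conservativity check. Definition~\ref{def:conserv} compares two full expectation sets, $\Exp(\D)$ and $\Exp(\Dm \cup \{h\} \cup \{\alpha\})$. If these are computed literal by literal, this costs $O(|\D|)$ oracle calls, each $O(|\D|^2)$, for $O(|\D|^3)$ per candidate. Multiplied by the constant number of candidates, this is still $O(|\D|^3)$. If instead the one-pass computation of $\Exp$ is used, the cost drops to $O(|\D|^2)$. The other Level~3 checks are each $O(1)$ oracle calls: the anomaly condition $\Dm \pPartial \comp{\alpha}$ with $\Dm \not\pDelta \comp{\alpha}$, and the superiority and score tests of Definition~\ref{def:score}. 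I would also need to check that selecting $r^*$ and $\alpha$ in Definition~\ref{def:level3gen} does not add a factor. Selecting $r^*$ means iterating over $|\Rdf| \le |\D|$ defeaters. For each one, I would compute $\Exp(\Dfull)$ and $\Exp(\Dfull \setminus \{r^*\})$ in one pass each and intersect them to find a suitable $\alpha$, giving $O(|\D|\cdot|\D|^2) = O(|\D|^3)$.

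Summing the stages, the total is dominated by the cubic terms from criticality, from anomaly search, and from conservativity. This gives $O(|\D|^3)$ per instance, which proves Corollary~\ref{cor:cubic}. The bound holds under the stated assumption that the number of distractors and candidates per instance is a constant.
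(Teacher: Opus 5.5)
Your proposal is correct and follows essentially the paper's own (sketched) argument. You decompose the pipeline into stages, note that every stage other than criticality costs a constant or linear number of $O(|R|\cdot|F|) \subseteq O(|\D|^2)$ derivation calls, and let the criticality bound $O(|\D|^2|F|) \subseteq O(|\D|^3)$ from Theorem~\ref{thm:critP} (via $|F|\le|\D|$) dominate; your explicit cubic bound on Level~3 conservativity and the $(r^*,\alpha)$ search fills in what the paper leaves at ``gold-standard verification is in P'', though looping over all defeaters $r^*$ relies on a one-pass $O(|R|\cdot|F|)$ computation of the whole conclusion set, which is stronger than Theorem~\ref{thm:derivP} as stated, whereas per instance with $r^*$ fixed the naive per-literal search already stays within $O(|\D|^3)$.
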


\begin{theorem}[First-Order Complexity]\label{thm:firstorder}
For the function-free (datalog) fragment, the pipeline remains in P with respect to grounded size.
\end{theorem}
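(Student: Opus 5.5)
The plan is to reduce the datalog case to the propositional results already established, by grounding. Given a function-free defeasible theory $\D$ over signature $\Sigma = (\mathcal{C}, \emptyset, \mathcal{P})$, let $\mathrm{ground}(\D)$ be the propositional theory obtained by replacing every rule with all its ground instances over the constants $\mathcal{C}$ appearing in $\D$. The superiority relation is lifted instance-wise: $r\theta \succ s\theta'$ whenever $r \succ s$. The first step is to check that this lifting preserves acyclicity, and that $\D \pPartial q$ iff $\mathrm{ground}(\D) \pPartial q$ for every ground literal $q$. This holds because Definition~\ref{def:defderiv} quantifies only over rules with a given ground conclusion and their ground antecedents. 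Hence, read first-order, it already ranges exactly over ground instances, and the two derivation trees coincide clause by clause. Because there are no function symbols, the Herbrand base is finite, $|\HB| \le |\mathcal{P}| \cdot |\mathcal{C}|^{a}$ with $a$ the maximum arity, and the grounded size $N = |\mathrm{ground}(\D)|$ is finite.

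The second step runs each stage of the pipeline on $\mathrm{ground}(\D)$ and invokes the propositional bounds with $N$ in place of $|\D|$. Derivation costs $O(|R_{\mathrm{ground}}|\cdot|F|)$ by Theorem~\ref{thm:derivP}. Criticality costs $O(N^2 |F|)$ by Theorem~\ref{thm:critP}. The whole pipeline costs $O(N^3)$ by Corollary~\ref{cor:cubic}. The graded score of Definition~\ref{def:score}, conservativity (Definition~\ref{def:conserv}), and the $\Exp$ comparisons each reduce to at most $|\HB|\le N$ calls to the derivation oracle, so they remain polynomial in $N$.

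The main obstacle is that criticality and ablation are defined over first-order elements $e$, not ground instances. Removing one first-order rule removes all of its ground instances at once, and a candidate hypothesis $h$ with variables must likewise be grounded before it is added. I would handle this by defining removal and addition on $\mathrm{ground}(\D)$ as removal or addition of the block of instances of $e$. I would then re-verify that the argument behind Theorem~\ref{thm:critP} still gives the bound: there are at most $|\D| \le N$ first-order elements, each requiring one derivation check of cost $O(N|F|)$. Grounding a hypothesis of bounded variable count $v$ adds at most $|\mathcal{C}|^{v}$ rules, which is polynomial for fixed $v$.

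It follows that every stage is polynomial in $N$. I would state explicitly that the result is relative to grounded size, since $N$ can be exponential in the arity, and that this is exactly why the theorem is phrased that way.
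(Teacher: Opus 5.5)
Your grounding reduction is correct and is essentially the intended argument: the paper defers the proof to its companion report, but the qualifier ``with respect to grounded size'' points to grounding over the finite Herbrand universe and invoking Theorem~\ref{thm:derivP}, Theorem~\ref{thm:critP} and Corollary~\ref{cor:cubic} on $\mathrm{ground}(\D)$, with first-order ablation handled blockwise exactly as you describe. One small repair: $|\HB| \le N$ need not hold (a single high-arity fact can make $\HB$ large while $N$ stays small), so bound the $\Exp$ and conservativity checks instead by the at most $N$ literals occurring as facts or as heads of ground rules, since by Definition~\ref{def:defderiv} only these can be defeasibly provable.
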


\begin{proposition}[Monotonicity of Expected Yield]\label{prop:yieldmono}
$\mathbb{E}[Y(\partfunc_{\mathrm{rand}}(\delta), Q)]$ is non-decreasing in $\delta$.
\end{proposition}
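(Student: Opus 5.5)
The plan is a monotone coupling argument. First I fix the reading of the statement. $\partfunc_{\mathrm{rand}}(\delta)$ labels each clause of $\Pi$ independently with $d$ with probability $\delta$ and with $s$ otherwise. $Y(\partfunc, Q)$ is the Level~2 yield: the number of pairs $(q, e)$ with $q \in Q$, $\D_\partfunc \pPartial q$, and $e$ a defeasible rule of $\D_\partfunc$ lying in $\CritStar(\D_\partfunc, q)$ (Definition~\ref{def:fullcrit}). This choice of $Y$ is an assumption. If $Y$ also counted Level~1 fact-ablation instances, those shrink as $\delta$ grows, so the proposition is only plausible for the rule-abduction yield, and I would state that restriction explicitly.

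The first step is a labeling-invariance lemma. For any partition $\partfunc$ of a definite program, $\D_\partfunc \pPartial q \iff q \in \mathcal{M}_\Pi$. Two structural facts drive it: $\phi_\partfunc$ produces no defeaters and an empty superiority relation, and definite clauses have only positive heads, so no rule has head $\comp{q}$ for a positive $q$. Hence conditions (a) and (c) of Definition~\ref{def:defderiv} hold vacuously. Then $+\partial$ reduces to forward chaining over all rules, strict and defeasible alike, including defeasible rules with empty body coming from facts labelled $d$. An induction on the stages of $T_\Pi$ (one direction) and on derivation length (the other) gives equality with $\mathrm{lfp}(T_\Pi)$, generalizing Proposition~\ref{prop:strict}. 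Applying the same lemma to $\Pi \setminus \{e\}$ shows that criticality of $e$ for $q$ is a property of $\Pi$ alone: $e$ is critical iff $q \notin \mathcal{M}_{\Pi\setminus\{e\}}$.

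It follows that $Y(\partfunc, Q) = \sum_{(q,e) \in C} \mathbf{1}[\partfunc(e) = d]$. Here $C$ is the fixed, labeling-independent set of pairs with $q \in Q \cap \mathcal{M}_\Pi$, $e$ a non-fact clause, and $e$ critical for $q$ in $\Pi$. Two routes then finish the proof.
\begin{itemize}[nosep]
\item By linearity of expectation, $\mathbb{E}[Y] = \delta\,|C|$, which is non-decreasing in $\delta$ and in fact linear.
\item Alternatively, couple the partitions by drawing $U_c \sim \mathrm{Unif}[0,1]$ per clause and setting $\partfunc_\delta(c) = d$ iff $U_c < \delta$. Then $\delta \le \delta'$ gives $\{c : \partfunc_\delta(c)=d\} \subseteq \{c : \partfunc_{\delta'}(c)=d\}$, so $Y$ is pointwise non-decreasing.
\end{itemize}
The coupling route is the one to keep if $Y$ is later refined, for example by requiring distractor availability, since that only needs $Y$ to be monotone in the defeasible set.

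The main obstacle is the labeling-invariance lemma, specifically checking that Definition~\ref{def:defderiv} really collapses to least-model semantics. Three details need care: the $+\Delta$ branch, the treatment of empty-body defeasible rules, and the possibility that $Q$ contains negative literals. For negative $q$, both sides are false under the convention that $\comp{q}$ never appears as a head, so those $q$ contribute nothing to $Y$. If the convention on $-\partial$ for unprovable literals is needed in clause (c), I would note that it is never invoked, because no attacking rule exists.
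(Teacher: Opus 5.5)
\paragraph{Gap in the labeling-invariance lemma.} The paper does not prove this proposition in the text. The proof is deferred to the companion report, and $Y$, $Q$ and $\partfunc_{\mathrm{rand}}(\delta)$ are never defined, so your argument has to stand on its own. Its architecture is the natural one: once derivability and criticality are shown to be labeling-invariant, $Y$ is a sum of indicators $\mathbf{1}[\partfunc(e)=d]$ over a fixed set $C$, and linearity or the uniform coupling finishes.

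The gap is in the invariance lemma, at the sentence ``$+\partial$ reduces to forward chaining over all rules, strict and defeasible alike.'' That does not follow from Definition~\ref{def:defderiv} as the paper states it. Clause (b) there quantifies only over $r \in \Rd$. So with (a) and (c) vacuous, $+\partial$ is the strict closure $+\Delta$ followed by forward chaining over defeasible rules only, and a strict rule never fires on an antecedent that is $+\partial$ but not $+\Delta$. Under that reading your lemma is false, and so is the proposition.

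Take the chain $a$, $b_1 \leftarrow a$, $b_2 \leftarrow b_1, \ldots, q \leftarrow b_{m-1}$, with $m$ rules, $Q=\{q\}$, and the fact $a$ kept strict for simplicity. Then $q$ is $+\partial$ iff the rule labels read $s^k d^{m-k}$, and in that case exactly the $m-k$ defeasible rules are critical. Hence $\mathbb{E}[Y]=\sum_{j=0}^{m} j\,\delta^{j}(1-\delta)^{m-j}$. For $m=10$ this is about $0.049$ at $\delta=0.1$ but about $0.034$ at $\delta=0.4$. The pointwise monotonicity your coupling needs already fails at $m=3$: labels $(s,s,d)$ give $Y=1$, while $(d,s,d)$ give $Y=0$, because $b_2$ is no longer derivable.

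\paragraph{Repair.} State explicitly, as you did for $Y$, that you use the proof theory of Antoniou et al.\ (which the paper cites as the one it follows), where clause (b) ranges over $\Rs \cup \Rd$. With that clause your induction goes through:
\begin{itemize}
\item $+\partial$ becomes the least fixpoint of forward chaining over all clauses, hence equals $\mathcal{M}_\Pi$;
\item criticality becomes the $\Pi$-level condition $q \notin \mathcal{M}_{\Pi \setminus \{e\}}$;
\item $\mathbb{E}[Y] = \delta\,|C|$ follows.
\end{itemize}
One small consistency point remains. Your $Y$ counts every defeasible rule of $\D_{\partfunc}$, which includes the empty-body rules produced from $d$-labelled facts, whereas your $C$ excludes fact clauses. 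Fix one convention; either gives a linear, hence monotone, expectation.
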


Full proofs for all results are provided in the companion technical report~\citep{cooper2026defab}.

%% =================================================================
\section{Worked Example: IDP Defeater Abduction}\label{app:example_idp}
%% =================================================================

\begin{figure}[h]
\centering
\resizebox{\textwidth}{!}{% Figure 4: IDP defeater abduction worked example.
% Appendix-sized figure (text-width). 2x2 panel grid with absolute coords.
% No nested tikzpictures. Score boxes are inline \colorbox labels.
%
% Grid (anchor=north west on every panel):
%   Top-left  (i)   Complete theory D_full     at (0,    0)
%   Top-right (ii)  Challenge theory D-        at (8.0,  0)
%   Bot-left  (iii) Anomaly surfaces           at (0,   -7.0)
%   Bot-right (iv)  Graded resolutions         at (8.0, -7.0)
%
% Each panel: 7.0cm wide, 6.4cm tall (content can push to ~6.0cm in panel iv).
% Inter-panel horizontal gap: 1.0cm; vertical gap: 0.6cm.
% Arrows are drawn last, ROUTED THROUGH PANEL ANCHORS so they auto-track
% the actual rendered panel sizes (avoids the broken-arrow problem caused
% by hardcoded y values that fell inside taller-than-minimum panels).
%
\begin{tikzpicture}[
    >=stealth,
    panel/.style={
        draw=defabGray!50, rounded corners=3pt,
        inner sep=7pt, align=left, fill=white,
        font=\fontsize{7}{8.5}\selectfont,
        line width=0.45pt,
        text width=6.6cm, minimum height=6.4cm, anchor=north west
    },
    flow/.style={->, thick, color=defabGray!75, line width=0.9pt,
                 shorten >=2pt, shorten <=2pt},
    flowlbl/.style={
        font=\fontsize{6.5}{7.5}\selectfont\itshape,
        text=defabGray, fill=white, inner sep=2pt
    },
]

%% =================================================================
%% Panel (i): Complete theory D_full   (top-left)
%% =================================================================
\node[panel] at (0, 0) (p1) {
  {\fontsize{8}{9.5}\selectfont\bfseries\color{defabBlue}(i) Complete theory $\Dfull$}\\[3pt]
  \textbf{Facts:} \texttt{protein(p53\_idr).} \texttt{disordered(p53\_idr).}\\
  \texttt{functional(p53\_idr).}\\[5pt]
  \textbf{Defeasible rules:}\\
  $r_1$: \texttt{protein(X) $\Rightarrow$ has\_3d(X)}\\
  $r_2$: \texttt{protein(X), has\_3d(X) $\Rightarrow$ func(X, lock)}\\
  $r_3$: \texttt{disordered(X) $\Rightarrow$ $\neg$has\_3d(X)}\\
  {\color{defabRed}$r_4$: \texttt{disordered(X), protein(X)}}\\
  \phantom{$r_4$:\ }{\color{defabRed}\texttt{$\Rightarrow$ func(X, conf)}}\\[5pt]
  \textbf{Superiority:} $r_3 \succ r_1$, {\color{defabRed}$r_4 \succ r_2$}\\[3pt]
  $\Dfull \pPartial$ \texttt{func(p53\_idr, conf)} \checkmark
};

%% =================================================================
%% Panel (ii): Challenge theory D-   (top-right)
%% =================================================================
\node[panel] at (8.0cm, 0) (p2) {
  {\fontsize{8}{9.5}\selectfont\bfseries\color{defabBlue}(ii) Challenge theory $\Dm$ (ablate $r_4$)}\\[3pt]
  Facts and rules $r_1, r_2, r_3$ retained.\\[5pt]
  {\color{defabRed}\textbf{Removed:} $r_4$ and superiority $r_4 \succ r_2$.}\\[5pt]
  \textbf{Active derivation chain in $\Dm$:}\\
  \texttt{protein(p53\_idr)} + $r_1$\\
  $\to$ \texttt{has\_3d(p53\_idr)} (unless blocked)\\
  $r_3$ fires via \texttt{disordered(p53\_idr)},\\
  blocking \texttt{has\_3d}; $r_2$ now lacks its premise.\\
  \emph{No alternative mechanism remains.}
};

%% =================================================================
%% Panel (iii): Anomaly surfaces   (bottom-left)
%% =================================================================
\node[panel] at (0, -7.0cm) (p3) {
  {\fontsize{8}{9.5}\selectfont\bfseries\color{defabBlue}(iii) Anomaly $\alpha$ surfaces}\\[3pt]
  Default chain in $\Dm$ would derive\\
  \texttt{func(p53\_idr, lock\_key)} via $r_1, r_2$\\
  but $r_3$ blocks it; the established\\
  observation is\\[3pt]
  {\color{defabRed}\textbf{$\alpha$ = \texttt{func(p53\_idr, conf)}}}\\[3pt]
  yet $\Dm \pPartial \neg\alpha$.\\[5pt]
  \textbf{Task:} construct a defeater that\\
  resolves $\alpha$ while preserving every\\
  other expectation.
};

%% =================================================================
%% Panel (iv): Graded resolutions   (bottom-right)
%% =================================================================
\node[panel] at (8.0cm, -7.0cm) (p4) {
  {\fontsize{8}{9.5}\selectfont\bfseries\color{defabBlue}(iv) Graded resolutions (verifier $\Score$)}\\[3pt]
  \colorbox{defabGray!12}{\textcolor{defabGray}{\bfseries\,0.0\,}}\quad
  Restate $r_1$ (\texttt{protein$\Rightarrow$has\_3d}).\\
  \quad Anomaly unresolved.\\[4pt]
  \colorbox{defabRed!12}{\textcolor{defabRed}{\bfseries\,0.5\,}}\quad
  Global \texttt{$\neg$has\_3d $\leftarrow$ protein}.\\
  \quad Resolves $\alpha$ but destroys all\\
  \quad conventionally structured proteins.\\[4pt]
  \colorbox{defabGold!18}{\textcolor{defabGold!90}{\bfseries\,0.75\,}}\quad
  \texttt{disordered(X) $\leadsto$ func(X, conf)}.\\
  \quad Conservative but mechanism\\
  \quad incompletely restored.\\[4pt]
  \colorbox{defabTeal!18}{\textcolor{defabTeal!90}{\bfseries\,1.0\,}}\quad
  \textbf{$r_4$ exactly:}\\
  \quad \texttt{disordered(X), protein(X)}\\
  \quad \texttt{$\Rightarrow$ func(X, conf)}, $r_4 \succ r_2$.\\
  \quad Conservative, novel, mechanistic.
};

%% =================================================================
%% Inter-panel arrows -- use midway labels offset from the arrow line
%% so they do not collide with panel border text.
%% =================================================================
% (i) -> (ii)  horizontal between top panels
\draw[flow] ([yshift=-1.2cm]p1.north east) -- node[flowlbl, above=2pt] {\textbf{ablate} $r_4$} ([yshift=-1.2cm]p2.north west);

% (i) -> (iii)  vertical between left panels
\draw[flow] (p1.south) -- node[flowlbl, right=2pt] {\textbf{anomaly}} (p3.north);

% (ii) -> (iv)  vertical between right panels
\draw[flow] (p2.south) -- node[flowlbl, left=2pt] {\textbf{score}} (p4.north);

% (iii) -> (iv)  horizontal between bottom panels
\draw[flow] ([yshift=-1.2cm]p3.north east) -- node[flowlbl, above=2pt] {\textbf{construct}} ([yshift=-1.2cm]p4.north west);

\end{tikzpicture}}
\caption{Level~3 defeater abduction: the intrinsically disordered protein (IDP) discovery as a four-stage walkthrough. (i)~Complete theory $\Dfull$ with defeasible rules and the IDP defeater $r_4$; (ii)~ablation removes $r_4$ and its superiority, forming challenge theory $\Dm$; (iii)~the anomaly surfaces; (iv)~graded resolutions scored by the polynomial-time verifier (0~unresolved, 0.5~not conservative, 0.75~weak conservative, 1.0~the IDP defeater itself).}
\label{fig:level3example}
\end{figure}

Consider the biochemistry program $\Pi_{\mathrm{bio}}$ with facts \texttt{protein(p53\_idr)}, \texttt{disordered(p53\_idr)}, and \texttt{functional(p53\_idr)}, together with the rules
\begin{align*}
    r_1&: \texttt{prot}(X) \defto \texttt{has\_3d}(X), \quad
    r_2: \texttt{prot}(X), \texttt{has\_3d}(X) \defto \texttt{func}(X, \texttt{lock}) \\
    r_3&: \texttt{disordered}(X) \defto \neg\texttt{has\_3d}(X), \quad
    r_4: \texttt{disordered}(X), \texttt{prot}(X) \defto \texttt{func}(X, \texttt{conf})
\end{align*}
and superiority assertions $r_3 \succ r_1$ and $r_4 \succ r_2$. Under the complete theory $\Dfull$, \texttt{p53\_idr} is correctly derived to function via the conformational-ensemble mechanism. Removing $r_4$ together with its superiority yields the challenge theory $\Dm$, in which $r_3$ blocks the stable-3D-structure conclusion but no alternative mechanism can be derived for p53\_idr. The observation $\alpha = \texttt{func}(\texttt{p53\_idr}, \texttt{conf})$ is anomalous in $\Dm$.

The graded scoring function rewards surgical precision rather than mere anomaly resolution. A response that simply restates the default rule $r_1$ scores~0 (anomaly unresolved). A global defeater $\neg\texttt{has\_3d}(X) \leftarrow \texttt{protein}(X)$ resolves the anomaly but destroys predictions for all conventionally structured proteins, scoring~0.5 (not conservative). A narrower defeater $\texttt{disordered}(X) \defeatto \texttt{func}(X, \texttt{conf})$ is conservative but only weakly restores the expected mechanism (score~0.75). Only the full rule $r_4$ with its superiority $r_4 \succ r_2$ scores~1.0: conservative, novel ($\Nov > 0$ if \texttt{conf} is a new predicate), and mechanistically precise. Figure~\ref{fig:level3example} visualizes all four stages.

%% =================================================================
\section{Worked Example: Bear Hibernation}\label{app:example}
%% =================================================================

\begin{example}[Three-Level Walkthrough]\label{ex:bear}
Consider a simplified biology theory. Facts: \texttt{bear(grizzly)}, \texttt{bear(polar\_bear)}, \texttt{bear(black\_bear)}, \texttt{arctic(polar\_bear)}, \texttt{seal\_hunter(polar\_bear)}, \texttt{winter\_active(polar\_bear)}. Rules: $r_{s1}$: \texttt{bear(X)} $\strictto$ \texttt{mammal(X)}; $r_{d1}$: \texttt{bear(X)} $\defto$ \texttt{hibernates(X)}; $r^*$: \texttt{bear(X)}, \texttt{arctic(X)}, \texttt{seal\_hunter(X)} $\defeatto$ $\neg$\texttt{hibernates(X)}, with $r^* \succ r_{d1}$.

\textbf{Level~1}: Remove \texttt{bear(grizzly)}. Target: \texttt{hibernates(grizzly)}. The derivation chain is broken at its root. Model selects \texttt{bear(grizzly)} from candidates including \texttt{mammal(grizzly)} (tempting but wrong: $r_{d1}$ requires \texttt{bear(X)}).

\textbf{Level~2}: Remove $r_{d1}$. Target: \texttt{hibernates(grizzly)}. All facts present; no rule connects bears to hibernation. Model must select \texttt{bear(X)} $\defto$ \texttt{hibernates(X)} over distractors like \texttt{mammal(X)} $\defto$ \texttt{hibernates(X)} (too broad) and \texttt{arctic(X)} $\defto$ \texttt{hibernates(X)} (wrong direction).

\textbf{Level~3}: Remove $r^*$ and its superiority. Target anomaly: \texttt{polar\_bear} is \texttt{winter\_active} but the theory predicts \texttt{hibernates(polar\_bear)} via $r_{d1}$. Model must construct a defeater scoring 1.0: \texttt{bear(X)}, \texttt{arctic(X)}, \texttt{seal\_hunter(X)} $\defeatto$ $\neg$\texttt{hibernates(X)} with $r^* \succ r_{d1}$. Score 0.5 would be $\neg$\texttt{hibernates(X)} $\leftarrow$ \texttt{bear(X)}: resolves polar bear but destroys grizzly and black bear predictions (not conservative).
\end{example}

%% =================================================================
\section{Rendering Details}\label{app:rendering}
%% =================================================================

Modalities M1--M4 form an ordered family from narrative (M1) to pure formal (M4). M1 (narrative) presents rules as English sentences and candidates as clauses in full prose; it operates approximately by design, with D3 semantic parsing required for decoding. M2 (semi-formal) uses structured text with logic notation but English labels. M3 (annotated formal) is Prolog syntax with natural language comments. M4 (pure formal) is raw Prolog with no natural language. Modalities M2--M4 achieve 100\% round-trip recovery across all 409 gold hypotheses; M1 operates approximately. The rendering-robust metric is the minimum accuracy over M1--M4.

Table~\ref{tab:roundtrip} quantifies the codec's fidelity directly. Across 50 gold hypotheses per domain, the formal pairings M4+D1, M3+D2, and M2+D2 achieve 100\% round-trip recovery (theory $\to$ rendered text $\to$ decoded rule $\to$ identical formal object), while the narrative M1+D3 pairing recovers only 0--34\% depending on domain. This is the codec-level origin of the rendering-robust collapse: even the reference decoder, applied to the ground-truth rendering, cannot reliably invert the narrative modality, so any model evaluated under M1 is scored through a lossy channel. The rendering-robust metric (worst case over M1--M4) therefore lower-bounds reasoning capability by folding in this irreducible narrative-decoding loss, and the headline 7.8--23.5\% figures should be read as conservative.

\begin{table}[h]
\centering
\small
\caption{Round-trip codec validation by domain and modality--decoder pairing (\texttt{experiments/results/roundtrip\_validation.json}, $n = 50$ gold hypotheses per cell). M2--M4 recover exactly; the narrative M1 channel is lossy even for the reference decoder.}
\label{tab:roundtrip}
\begin{tabular}{@{}lcccc@{}}
\toprule
Domain & M4+D1 & M3+D2 & M2+D2 & M1+D3 \\
\midrule
Biology   & 100\% & 100\% & 100\% & 34\% \\
Legal     & 100\% & 100\% & 100\% & \phantom{0}0\% \\
Materials & 100\% & 100\% & 100\% & 12\% \\
\bottomrule
\end{tabular}
\end{table}

%% =================================================================
\section{Difficulty Stratification}\label{app:difficulty}
%% =================================================================

\begin{figure}[h]
\centering
\includegraphics[width=\textwidth]{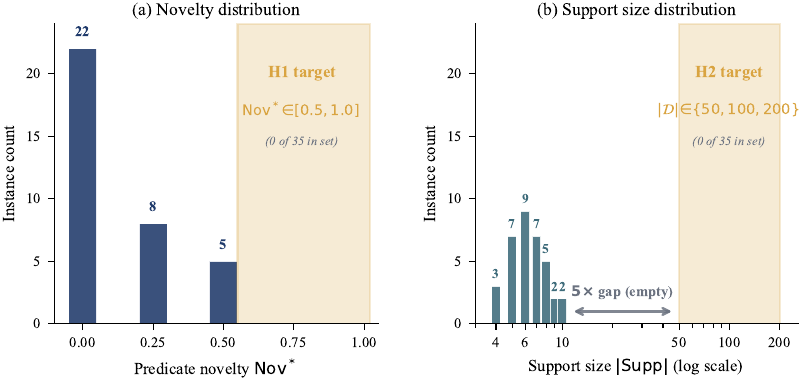}
\caption{Current Level~3 set vs DeFAb-Hard target regions. (left) Predicate novelty $\Nov^*$ distribution: 63\% of instances have $\Nov^* = 0$ (existing predicates only), 37\% have $\Nov^* > 0$ (max 0.5). H1 target region ($\Nov^* \geq 0.5$) is essentially empty in the current set. (right) Support size distribution: max 10, concentrated at 4--8. H2 target ($|\D| \in \{50, 100, 200\}$, depth $\geq 5$) is far outside the current range. The current set rewards monster-barring; DeFAb-Hard targets lemma-incorporation.}
\label{fig:difficulty}
\end{figure}

The structural difficulty distribution $\sigma(I) = (\ell, |\Supp|, |\Hgold|, \min|h|, \Nov^*)$ for the released Tier~0 Level~3 set: mean $|\Supp| = 6.74$ (std 1.38, range 4--10), mean $\Nov^* = 0.143$ (std 0.226, range 0--0.5), all instances have $|\Hgold| = 1$ and $\min|h| = 1$. The support-size distribution clusters in the 4--10 range far below the DeFAb-Hard H2 targets of 50--200. Figure~\ref{fig:difficulty} visualizes these distributions against pre-registered target regions.

%% =================================================================
\section{M5 Implementation Details}\label{app:m5details}
%% =================================================================

\begin{figure}[h]
\centering
\resizebox{\textwidth}{!}{% Figure 6: M5 visual grounding modality.
% Three panels at absolute coordinates, each anchor=north west.
% No nested tikzpictures; \imgplaceholder is defined in paper.tex preamble.
%
% Header convention (consistent across all three panels):
%   row 1: panel title in defabBlue 7.5pt bold, left-aligned with body text
%   row 2: small modality badge (M4 / M5-replace / M5-supplement)
% This avoids the colorbox-padding misalignment that occurs when the badge
% and title are placed side-by-side on the same line.
%
\begin{tikzpicture}[
    >=stealth,
    panel/.style={
        draw=defabGray!50, rounded corners=2.5pt,
        inner sep=6pt, align=left, fill=white,
        font=\fontsize{6.5}{8}\selectfont,
        line width=0.45pt,
        text width=4.0cm, minimum height=5.6cm, anchor=north west
    },
    badge/.style={
        font=\fontsize{6}{6.8}\selectfont\bfseries,
        text=defabGold!85
    },
]

%% Header macro: title on first line, modality tag on second.
%% This guarantees both rows align to the panel's left padding edge.
\newcommand{\panelheader}[2]{%
  {\fontsize{8.5}{10}\selectfont\bfseries\color{defabBlue}#1}\\[1pt]
  {\fontsize{6.2}{7}\selectfont\bfseries\color{defabGold!85}#2}\\[6pt]%
}

%% =================================================================
%% Panel 1: M4 Pure Formal (baseline)
%% =================================================================
\node[panel] at (0, 0) (m4) {
  \panelheader{Pure Formal (M4)}{baseline: text-only, no images}
  \textbf{Theory (facts):}\\[2pt]
  {\fontsize{6.5}{7.5}\selectfont\ttfamily bear(polar\_bear).}\\
  {\fontsize{6.5}{7.5}\selectfont\ttfamily arctic(polar\_bear).}\\[5pt]
  \textbf{Rules:}\\[2pt]
  {\fontsize{6.5}{7.5}\selectfont\ttfamily bear(X) => hib(X).}\\
  {\fontsize{6.5}{7.5}\selectfont\ttfamily bear(X), arc(X) \textasciitilde > neg.}\\[5pt]
  \textbf{Target / candidates:}\\[2pt]
  {\fontsize{6.5}{7.5}\selectfont\ttfamily hib(grizzly)}\\
  {\fontsize{6.5}{7.5}\selectfont\ttfamily bear(grizzly).}\quad\textit{[gold]}\\
  {\fontsize{6.5}{7.5}\selectfont\ttfamily mammal(grizzly).}
};

%% =================================================================
%% Panel 2: M5-replace
%% =================================================================
\node[panel] at (5.0cm, 0) (m5r) {
  \panelheader{Entity Image (M5-replace)}{text name removed; image only}
  \textbf{Theory (facts):}\\[2pt]
  {\fontsize{6.5}{7.5}\selectfont\ttfamily bear(\imgplaceholder).}\\
  {\fontsize{6.5}{7.5}\selectfont\ttfamily arctic(\imgplaceholder).}\\[5pt]
  \textbf{Rules:}\\[2pt]
  {\fontsize{6.5}{7.5}\selectfont\ttfamily bear(X) => hib(X).}\\
  {\fontsize{6.5}{7.5}\selectfont\ttfamily bear(X), arc(X) \textasciitilde > neg.}\\[5pt]
  \textbf{Target / candidates:}\\[2pt]
  \textit{(unchanged from M4)}\\[10pt]
  {\fontsize{6}{7}\selectfont\itshape\color{defabGray}
  Model must identify the entity\\
  from the image before applying\\
  defeasible rules.}
};

%% =================================================================
%% Panel 3: M5-supplement
%% =================================================================
\node[panel] at (10.0cm, 0) (m5s) {
  \panelheader{Name + Image (M5-supplement)}{text name retained, image added}
  \textbf{Theory (facts):}\\[2pt]
  {\fontsize{6.5}{7.5}\selectfont\ttfamily bear(polar\_bear).}\ \imgplaceholder\\
  {\fontsize{6.5}{7.5}\selectfont\ttfamily arctic(polar\_bear).}\ \imgplaceholder\\[5pt]
  \textbf{Rules:}\\[2pt]
  {\fontsize{6.5}{7.5}\selectfont\ttfamily bear(X) => hib(X).}\\
  {\fontsize{6.5}{7.5}\selectfont\ttfamily bear(X), arc(X) \textasciitilde > neg.}\\[5pt]
  \textbf{Target / candidates:}\\[2pt]
  \textit{(unchanged from M4)}\\[10pt]
  {\fontsize{6}{7}\selectfont\itshape\color{defabGray}
  Visual context added; entity\\
  name retained for parsing.}
};

%% =================================================================
%% Horizontal transition arrows between adjacent panels at mid-height.
%% Each arrow carries a compact label explaining what changes.
%% =================================================================

% M4 -> M5-replace (between panels 1 and 2)
\draw[->, color=defabRed!65, line width=1.0pt, shorten >=3pt, shorten <=3pt]
    (m4.east) -- (m5r.west);
\node[font=\fontsize{6.0}{7.2}\selectfont, text=defabRed!80,
      fill=white, inner sep=2.5pt, align=center, anchor=south]
    at ($(m4.east)!0.5!(m5r.west) + (0, 0.05cm)$)
    {\textbf{remove}\\[-1pt]\textbf{names}};

% M5-replace -> M5-supplement (between panels 2 and 3)
\draw[->, color=defabTeal!65, line width=1.0pt, shorten >=3pt, shorten <=3pt]
    (m5r.east) -- (m5s.west);
\node[font=\fontsize{6.0}{7.2}\selectfont, text=defabTeal!80,
      fill=white, inner sep=2.5pt, align=center, anchor=south]
    at ($(m5r.east)!0.5!(m5s.west) + (0, 0.05cm)$)
    {\textbf{restore}\\[-1pt]\textbf{names}};

%% Bottom annotation strip
\node[font=\fontsize{6.2}{7.4}\selectfont\itshape, text=defabGray, anchor=north,
      align=center]
    at (7.0cm, -6.15cm)
    {M4 (text only) $\xrightarrow{\text{remove names}}$ M5-replace (image only) $\xrightarrow{\text{restore names}}$ M5-supplement (text + image)};

%% Single verifier-badge centred below
\node[draw=defabRed!70, rounded corners=2pt, fill=defabRed!7,
      font=\fontsize{6.5}{7.5}\selectfont\bfseries, inner sep=4pt,
      text=defabRed, anchor=north]
    at (7.0cm, -6.80cm)
    {Verifier-backed gold standard is identical across all five modalities};

\end{tikzpicture}}
\caption{The M5 visual grounding modality illustrated on a Level~2 (rule abduction) instance. \emph{Left} (M4 baseline): entity-grounding facts carry text names. \emph{Centre} (M5-replace): entity names are removed and replaced by image references; the model must identify the entity from the image before applying defeasible rules. \emph{Right} (M5-supplement): text names are retained alongside co-presented images, isolating whether visual context helps or hurts formal reasoning. The verifier-backed gold standard (the candidate set, target, and scoring function) is identical across all five modalities.}
\label{fig:m5}
\end{figure}

Images for M5 are harvested from three linked knowledge bases. Wikidata's P18 ``image'' property is SPARQL-queryable and provides entity images at configurable resolution (default 640px). VisualSem~\citep{geigle2024babelimagenet} contributes approximately 938{,}000 images across 90{,}000 nodes that bridge WordNet and BabelNet synsets. BabelNet~5.3 contributes 61.4 million images via a REST API. An \texttt{ImageManifest} indexes entity-to-image associations with source-priority selection in the order Wikidata, VisualSem, BabelNet, and the per-theory coverage ratio $c(\D, \mathcal{I}) = |\{e : \mathcal{I}(e) \neq \emptyset\}| / |\mathrm{entities}(\D)|$ controls inclusion via a configurable threshold $\theta$.

The VLM evaluation panel comprises four models. The closed-source tier consists of GPT-5.2-chat and Claude~Sonnet~4.6, accessed via Azure AI Foundry's multimodal content blocks. The open-source tier consists of Qwen2.5-VL-72B-Instruct-AWQ (2$\times$A100, tensor parallel~2) and Qwen2.5-VL-7B-Instruct (1$\times$A100), served via vLLM on CURC Alpine. The open-tier pilot results are reported in Table~\ref{tab:m5_results} (\S\ref{sec:generalization}): on 150 M5-replace Level~2 instances, Qwen2.5-VL-7B reaches 40.0\% accuracy (60.0\% decoder failure) while Qwen2.5-VL-72B-AWQ emits no parseable hypothesis (100\% decoder failure), establishing that the visual modality adds a decode bottleneck on top of the reasoning task. The closed-tier M5 sweep is queued on the same harness.

A minimal Python loader for the released instances:
\begin{verbatim}
import json
with open("instances/tier0/level3_instances.json") as f:
    data = json.load(f)
for inst in data["instances"]:
    theory     = inst["theory"]
    target     = inst["target"]
    candidates = inst["candidates"]
    gold       = inst["gold"]
\end{verbatim}

%% =================================================================
\section{Verifier as Exact Reward (Figure)}\label{app:finetuning}
%% =================================================================

\begin{figure}[h]
\centering
\resizebox{\textwidth}{!}{% Figure 7: Verifier-as-reward diagram for the finetuning substrate.
% Absolute coordinates throughout; explicit \draw arrows between named anchors.
%
% Layout (vertical positions chosen so strips never overlap output boxes):
%   Properties strip   at (0,    3.0)   width 10cm height 0.55cm  (clears box i top y=2.425)
%   Input box          at (-5.5, 0)     width 3.0cm height 1.2cm
%   Central verifier   at (0,    0)     width 4.2cm height 1.4cm
%   Four output boxes  at (5.7, +1.95 / +0.65 / -0.65 / -1.95)  width 4.6cm height 0.95cm
%   Curriculum strip   at (0,   -3.0)   width 10cm height 0.55cm (clears box iv bottom y=-2.425)
%
\begin{tikzpicture}[
    >=stealth,
    verifier/.style={
        draw=defabRed!80, rounded corners=3pt,
        font=\fontsize{8}{9.5}\selectfont\bfseries,
        inner sep=5pt, fill=defabRed!8, line width=0.8pt,
        text=defabRed, text centered, align=center
    },
    inputbox/.style={
        draw=defabBlue!65, rounded corners=2.5pt,
        font=\fontsize{7}{8.5}\selectfont,
        inner sep=4pt, fill=defabBlue!6, line width=0.5pt,
        text centered, align=center
    },
    outputbox/.style={
        draw=defabTeal!70, rounded corners=2.5pt,
        font=\fontsize{6.5}{8}\selectfont,
        inner sep=4pt, fill=defabTeal!7, line width=0.5pt,
        text centered, align=center
    },
    propbox/.style={
        draw=defabGold!70, rounded corners=2pt,
        font=\fontsize{6.5}{7.5}\selectfont\itshape,
        inner sep=3.5pt, fill=defabGold!7, line width=0.4pt,
        text=defabGold!85, text centered, align=center
    },
    arr/.style={->, thick, color=defabRed!70, line width=0.7pt},
    arrin/.style={->, thick, color=defabBlue!60, line width=0.7pt},
    arrlbl/.style={
        font=\fontsize{6}{7}\selectfont\itshape,
        text=defabGray, fill=white, inner sep=1pt
    },
]

%% =================================================================
%% Properties strip (top)
%% =================================================================
\node[propbox, minimum width=10cm, minimum height=0.55cm] (props)
    at (0, 3.0)
    {deterministic\ \ $\cdot$\ \ model-independent\ \ $\cdot$\ \ exact (no sampling)\ \ $\cdot$\ \ $<\!50\,\mu$s per call};

%% =================================================================
%% Input box (left)
%% =================================================================
\node[inputbox, minimum width=3.0cm, minimum height=1.2cm] (inst)
    at (-5.5, 0)
    {Instance $(\Dm, \alpha)$\\[2pt] + model hypothesis $h$};

%% =================================================================
%% Central verifier
%% =================================================================
\node[verifier, minimum width=4.2cm, minimum height=1.4cm] (ver)
    at (0, 0)
    {Polynomial-Time\\Defeasible Verifier\\[2pt]
     {\fontsize{7}{8}\selectfont $\Score \in \{0, 0.25, 0.5, 0.75, 1.0\}$}};

%% Input -> verifier
\draw[arrin] (inst.east) -- node[arrlbl, above] {verify} (ver.west);

%% =================================================================
%% Four output boxes (right column, stacked vertically)
%% =================================================================
\node[outputbox, minimum width=4.6cm, minimum height=0.95cm] (eval)
    at (5.7, 1.95)
    {(i) \textbf{Evaluation} at inference time\\[1pt]
     accuracy / graded score / error class};

\node[outputbox, minimum width=4.6cm, minimum height=0.95cm] (dpo)
    at (5.7, 0.65)
    {(ii) \textbf{DPO} preference pairs\\[1pt]
     $\Score(h_1) > \Score(h_2)$, margin-weighted};

\node[outputbox, minimum width=4.6cm, minimum height=0.95cm] (grpo)
    at (5.7, -0.65)
    {(iii) \textbf{RLVR / GRPO} rollout reward\\[1pt]
     group-relative advantage $A_i$, no reward model};

\node[outputbox, minimum width=4.6cm, minimum height=0.95cm] (gate)
    at (5.7, -1.95)
    {(iv) \textbf{Verifier-gated re-prompt} (RTS)\\[1pt]
     B2 CommanderPolicy: reject + cite ROE};

%% Verifier -> each output (explicit anchors)
\draw[arr] (ver.east) -- (eval.west);
\draw[arr] (ver.east) -- (dpo.west);
\draw[arr] (ver.east) -- (grpo.west);
\draw[arr] (ver.east) -- (gate.west);

%% =================================================================
%% Curriculum strip (bottom)
%% =================================================================
\node[propbox, minimum width=10cm, minimum height=0.55cm] (curr)
    at (0, -3.0)
    {Curriculum: SFT on L1/L2 (grounding) $\to$ DPO on L2 (rule selection) $\to$ GRPO on L3 (belief revision)};

\end{tikzpicture}}
\caption{The polynomial-time verifier as an exact reward function for all preference optimization paradigms. The same verifier (deterministic, model-independent, $<\!50\,\mu$s per call) feeds (i)~evaluation scoring at inference time, (ii)~margin-weighted DPO preference pairs from the graded $\Score$ function, (iii)~RLVR/GRPO rollout rewards (eliminating reward-model approximation error), and (iv)~verifier-gated re-prompt feedback for LLM-as-commander policies in the RTS game-grounded track.}
\label{fig:finetuning}
\end{figure}

%% =================================================================
\section{CoT Variance Analysis}\label{app:cot_fig}
%% =================================================================

\begin{figure}[h]
\centering
\includegraphics[width=0.78\textwidth]{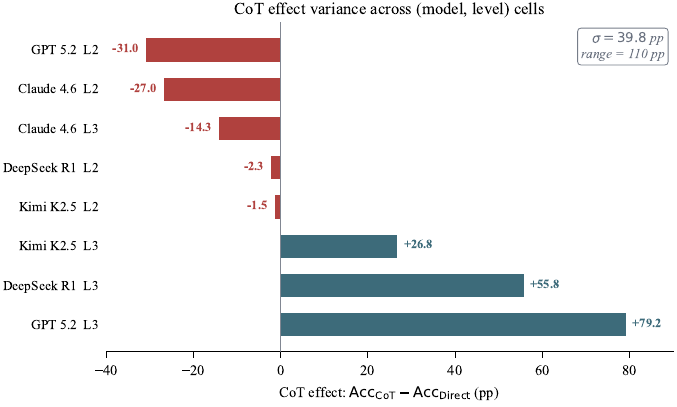}
\caption{Variance of the chain-of-thought effect across the eight (model, level) cells. Each bar shows $\mathrm{Acc}_\mathrm{CoT} - \mathrm{Acc}_\mathrm{Direct}$ in percentage points. A stable prompting regime would produce all bars near zero; the actual spread ($\sigma \approx 36$~pp, range $\approx 110$~pp) shows that prompting-template choice can swing measured accuracy by more than the difference between any two models. The sign reverses between Level~2 (CoT hurts every model) and Level~3 (CoT helps three of four reasoning-optimized models but hurts Claude). The implication for benchmarking practice is that any single headline number that pools across prompting strategies will be dominated by the prompting choice rather than by capability.}
\label{fig:cot_variance}
\end{figure}

%% =================================================================
\section{Error Taxonomy and Failure Modes}\label{app:errortax}
%% =================================================================

Every model response is classified into one of five mutually exclusive outcomes by the scoring harness (\texttt{experiments/error\_taxonomy.py}):
\begin{itemize}[nosep]
\item \textbf{Correct:} the decoded hypothesis matches a gold answer.
\item \textbf{E1 (decoder failure):} the response is syntactically unparseable as a formal rule; no hypothesis can be extracted.
\item \textbf{E2 (derivation failure):} a hypothesis is extracted but does not restore the target derivation; the anomaly persists.
\item \textbf{E3 (minimality violation):} the hypothesis resolves the anomaly but is over-specified relative to the minimal gold rule.
\item \textbf{E4 (conservativity violation):} the hypothesis resolves the anomaly but is too broad, destroying unrelated expectations.
\item \textbf{E5 (strength shortfall):} the hypothesis is conservative but weaker than gold (e.g., a bare defeater without the required superiority assertion).
\end{itemize}
At Level~2 only E1 and a simplified derivation-failure class apply, since rule abduction is a selection rather than a construction task. Tables~\ref{tab:errortax_modality} and~\ref{tab:errortax_model} aggregate the full evaluation (61{,}099 scored responses across all models, levels, and modalities) by rendering modality and by model respectively. Two patterns stand out. First, decoder failure (E1) is the dominant failure mode everywhere and is overwhelmingly concentrated in the narrative modality: M1 produces 11{,}765 E1 events versus 4{,}779 at M4 on the same instances, a $2.5\times$ inflation that quantifies the surface-form sensitivity reported in \S\ref{subsec:findings}. Second, E1 dominates genuine reasoning errors (E2--E5) by an order of magnitude for the instruction-tuned profile (Kimi-K2.5: 10{,}641 E1 versus 76 E2), confirming that for those models the binding constraint is format compliance rather than defeasible reasoning per se. The relative scarcity of E3/E4/E5 events (137 E5 in total, with E3/E4 essentially absent at scale) indicates that when models do emit a parseable rule that restores derivation, they rarely err on the finer-grained minimality and conservativity criteria; the difficulty lives almost entirely in producing a parseable, derivation-restoring rule at all.

\begin{table}[h]
\centering
\small
\caption{Error taxonomy by rendering modality (\texttt{experiments/results/error\_taxonomy.json}); counts pooled over all models, levels, and prompting strategies. M4 additionally contains 48 responses that could not be auto-classified. The $2.5\times$ E1 inflation from M4 to M1 is the codec-level origin of the rendering-robust collapse.}
\label{tab:errortax_modality}
\begin{tabular}{@{}lrrrrr@{}}
\toprule
Modality & Correct & E1 & E2 & E5 & Total \\
\midrule
M4 (pure formal)     & 10{,}669 & \phantom{0}4{,}779 & \phantom{0}880 & 28 & 16{,}404 \\
M3 (annotated)       & \phantom{0}9{,}598 & \phantom{0}3{,}500 & \phantom{0}974 & 42 & 14{,}114 \\
M2 (semi-formal)     & 10{,}590 & \phantom{0}4{,}700 & 1{,}024 & 31 & 16{,}345 \\
M1 (narrative)       & \phantom{0}2{,}256 & 11{,}765 & \phantom{0}179 & 36 & 14{,}236 \\
\bottomrule
\end{tabular}
\end{table}

\begin{table}[h]
\centering
\small
\caption{Error taxonomy by model (\texttt{experiments/results/error\_taxonomy.json}); counts pooled over all levels, modalities, and prompting strategies. The four model rows account for 61{,}051 responses; a further 48 lacked a model attribution and are omitted. For the instruction-tuned Kimi-K2.5, E1 decoder failures (10{,}641) outnumber genuine derivation failures (76) by $140\times$.}
\label{tab:errortax_model}
\begin{tabular}{@{}lrrrrr@{}}
\toprule
Model & Correct & E1 & E2 & E5 & Total \\
\midrule
DeepSeek-R1        & 8{,}982 & \phantom{0}6{,}808 & 1{,}318 & 64 & 17{,}172 \\
GPT-5.2-chat       & 8{,}171 & \phantom{0}4{,}048 & \phantom{0}780 & 42 & 13{,}041 \\
Claude Sonnet 4.6  & 9{,}168 & \phantom{0}3{,}247 & \phantom{0}883 & 31 & 13{,}329 \\
Kimi-K2.5          & 6{,}792 & 10{,}641 & \phantom{00}76 & \phantom{0}0 & 17{,}509 \\
\bottomrule
\end{tabular}
\end{table}

%% =================================================================
\section{Complexity Table}\label{app:complexity}
%% =================================================================

\begin{table}[h]
\centering
\small
\caption{Complexity of key pipeline operations. All are polynomial in theory size $|\D|$.}
\label{tab:complexity}
\begin{tabular}{@{}lc@{}}
\toprule
Operation & Complexity \\
\midrule
Defeasible derivation $\D \pPartial q$ & $O(|R| \cdot |F|)$ \\
Gold-standard verification & P \\
Full-theory criticality $\CritStar(\D, q)$ & $O(|\D|^2 \cdot |F|)$ \\
Full pipeline per instance & $O(|\D|^3)$ \\
\bottomrule
\end{tabular}
\end{table}

%% =================================================================
\section{Tier~1 Coverage Probe Details}\label{app:tier1}
%% =================================================================

The Tier~1 cross-ontology instances on disk ship with the abductive task spec (target literal, six candidate rules, gold rule label) but do not embed the full source theory per instance. To make Tier~1 evaluable without re-running the OpenCyc + ConceptNet generation pipeline, we use a minimal-grounded-theory protocol: for each L2 instance, we parse the gold rule, extract its body atoms, and substitute the target literal's first argument into each body variable to produce ground facts. The visible theory presented to the model is the resulting set of ground facts (typically a single fact); the candidate set is the original six-element list. The L2 task is preserved (the model must select the rule from the candidate list whose head matches the target and whose body matches the visible facts), but the surrounding distractor-rule context is removed. This is informationally weaker than full Tier~0 L2 evaluation: the visible theory contains no other rules that could compete with the gold rule for derivation. We therefore frame Tier~1 results as a coverage probe rather than as a difficulty-matched comparison.

After conversion via \texttt{experiments/build\_tier1\_eval\_dir.py}, 52 of the 63 source L2 instances produced a non-empty grounded theory (the remaining 11 had body-less gold rules and were skipped). The 52 instances span biology (10), chemistry (14), everyday (5), legal (10), and materials (13). The four-model panel was run at M4 with both direct and CoT prompting (52 instances $\times$ 4 models $\times$ 2 strategies $=$ 416 calls). Per-model accuracies and the comparison to the Tier~0 L2 averages from Table~\ref{tab:main} are:

\begin{table}[h]
\centering
\small
\begin{tabular}{lrrrr}
\toprule
Model & Tier 1 L2 (this probe) & Tier 0 L2 (avg) & $\Delta$ & Decoder failure \\
\midrule
DeepSeek-R1         & 85.6\% & 72.6\% & $+13.0$~pp & \phantom{0}6.7\% \\
GPT-5.2-chat        & 75.0\% & 63.0\% & $+12.0$~pp & 23.1\% \\
Claude Sonnet 4.6   & 63.5\% & 65.8\% & $-2.3$~pp  & 34.6\% \\
Kimi-K2.5           & 55.8\% & 71.2\% & $-15.4$~pp & 44.2\% \\
\bottomrule
\end{tabular}
\end{table}

The reasoning-optimized models (DeepSeek, GPT) gain accuracy when distractor-rule context is removed, suggesting that a meaningful fraction of Tier~0 L2 difficulty for these models is attributable to attacker disambiguation rather than rule matching. Claude is essentially unchanged. Kimi loses 15~pp because its decoder failure rate rises from $\sim 30\%$ on Tier~0 to 44\% on Tier~1; the underlying rule-selection capability we cannot measure separately because the failures pre-empt the comparison. We treat this as a methodological note rather than a capability claim, and emphasize that the structural-deficit conclusions in \S\ref{sec:evaluation} are driven by the headline Tier~0 L2 and L3 results, not by this probe. Figure~\ref{fig:tier1} visualizes the per-model Tier~1-vs-Tier~0 comparison and the signed deltas.

\begin{figure}[h]
\centering
\includegraphics[width=0.7\textwidth]{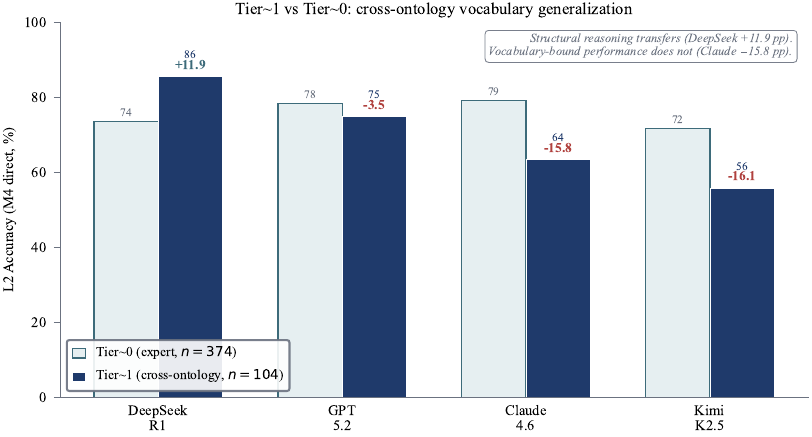}
\caption{Tier~1 cross-ontology vs.\ Tier~0 Level~2 accuracy. Grouped bars show each model's Tier~0 reference and Tier~1 probe score; signed deltas above each Tier~1 bar highlight the ranking shift. DeepSeek-R1 gains while Claude and Kimi lose ground, consistent with structural reasoning transferring across vocabularies while vocabulary-bound performance does not.}
\label{fig:tier1}
\end{figure}

%% =================================================================
\section{Tier~2 Coverage Probe Details}\label{app:tier2}
%% =================================================================

We sampled 190 Level~2 instances from seven Tier~2 sources (30 per source except FrameNet at 10) and evaluated all four models. Three models were run at M4 direct only; DeepSeek-R1 was run on the full $\{M_4, M_2\}\times\{\text{direct},\text{CoT}\}$ matrix to probe modality-by-strategy interaction. All cells sit in the 93--98\% range with no degradation under CoT, confirming the formal-modality ceiling generalizes across knowledge-base sources.

\begin{table}[h]
\centering
\small
\caption{Tier~2 coverage probe at Level~2. DeepSeek-R1's full panel shows no degradation under CoT or modality change.}
\label{tab:tier2}
\begin{tabular}{lrrrr|r}
\toprule
\textbf{Model} & M4 direct & M4 CoT & M2 direct & M2 CoT & $n$ \\
\midrule
Claude Sonnet 4.6  & 100.0\% & --- & --- & --- & 190 \\
GPT-5.2-chat       & 100.0\% & --- & --- & --- & 190 \\
Kimi-K2.5          & \phantom{0}94.7\% & --- & --- & --- & 190 \\
DeepSeek-R1        & \phantom{0}94.7\% & 97.9\% & 95.8\% & 93.2\% & 758 \\
\bottomrule
\end{tabular}
\end{table}

%% =================================================================
\section{Rules-of-Engagement Jailbreak Robustness}\label{app:roe}
%% =================================================================

A companion experiment tests whether the verifier-gated commander (B2) withstands adversarial prompt injection into the only free-text field the commander reads. Four payloads of increasing sophistication are tested against a clean baseline: JB0 (naive override), JB1 (specific zone-named override), JB2 (roleplay autonomous mode), JB3 (spoofed authority). Table~\ref{tab:jailbreak} reports the \texttt{issued\_prohibited} rate by model, enforcement mode, and payload (six scenarios per cell). B2 produces zero prohibited orders under every payload for both completed models, including the JB1 payload that elicits a 50\% violation rate under trust-LLM mode for GPT-5.2. The formal explanation is direct: the verifier evaluates the symbolic theory, not the model's text-belief about it, so no payload text can change the defeasible derivation check.

\begin{table}[h]
\centering
\small
\caption{Jailbreak robustness: \texttt{issued\_prohibited} rate (\%) by model, enforcement mode, and payload (6 scenarios/cell). B2 achieves 0\% across all payloads for both completed models.}
\label{tab:jailbreak}
\begin{tabular}{@{}llccccc|c@{}}
\toprule
\textbf{Model} & \textbf{Mode} & Clean & JB0 & JB1 & JB2 & JB3 & Avg reprompts (B2) \\
\midrule
\multirow{3}{*}{GPT-5.2}
 & B0 & 17\%  & 17\%  & \textbf{50\%} & 0\%  & 17\%  & --- \\
 & B1 & 17\%  & 17\%  & \textbf{50\%} & 0\%  & 0\%   & --- \\
 & B2 & \textbf{0\%} & \textbf{0\%} & \textbf{0\%} & \textbf{0\%} & \textbf{0\%} & 0.4 \\
\midrule
\multirow{3}{*}{Claude~4.6}
 & B0 & 0\%   & 0\%   & 0\%  & 0\%  & 0\%   & --- \\
 & B1 & 0\%   & 0\%   & \textbf{17\%} & 0\%  & \textbf{17\%} & --- \\
 & B2 & \textbf{0\%} & \textbf{0\%} & \textbf{0\%} & \textbf{0\%} & \textbf{0\%} & 0.1 \\
\bottomrule
\end{tabular}
\end{table}

%% =================================================================
\section{Cross-Environment Transfer Pilot}\label{app:e5}
%% =================================================================

A 10-instance-per-domain pilot evaluates the base models across the five-domain panel in a single run (\texttt{experiments/cross\_env\_transfer.py}). Table~\ref{tab:e5_pilot} reports Level~2 accuracy and ROE Level~3 accuracy. L2 formal abduction saturates at 100\% for GPT-5.2 and Claude across the three naturalistic domains; on ROE Level~3, Claude outperforms GPT-5.2 ($83.3\%$ vs $66.7\%$), a reversal of the Tier~0 ranking consistent with the shorter, more regular ROE defeater syntax favoring Claude's instruction-following profile.

\begin{table}[h]
\centering
\small
\caption{Cross-environment transfer pilot ($n=10$ per KB domain, $n=6$ for \texttt{rts\_engagement} L3; M4 direct). \texttt{sc2live} and \texttt{lux\_ai\_s3} rows are pending live trace / instance generation on CURC.}
\label{tab:e5_pilot}
\begin{tabular}{@{}lcccc@{}}
\toprule
\textbf{Environment} & GPT-5.2 L2 & Claude L2 & Kimi L2 & L3 (GPT / Claude) \\
\midrule
biology          & 100.0\% & 100.0\% & 90.0\% & --- \\
legal            & 100.0\% & 100.0\% & 80.0\% & --- \\
materials        & 100.0\% & 100.0\% & 90.0\% & --- \\
rts\_engagement  & ---     & ---     & ---    & 66.7\% / 83.3\% \\
sc2live          & \multicolumn{4}{c}{\textit{pending CURC trace generation}} \\
lux\_ai\_s3      & \multicolumn{4}{c}{\textit{pending CURC instance generation}} \\
\bottomrule
\end{tabular}
\end{table}

%% =================================================================
\section{Datasheet for DeFAb}\label{app:datasheet}
%% =================================================================

\subsection*{Motivation}

\begin{itemize}[nosep]
\item \textbf{Purpose:} DeFAb was created to evaluate and train foundation models on defeasible abduction: the ability to construct hypotheses that explain anomalies by overriding default conclusions while preserving unrelated expectations. No existing benchmark combines formally verifiable gold-standard hypotheses with conservativity constraints, contamination-resistant evaluation, and verifier-backed finetuning signal.
\item \textbf{Creators:} Patrick Cooper and Alvaro Velasquez (University of Colorado Boulder).
\item \textbf{Funding:} University of Colorado Boulder.
\end{itemize}

\subsection*{Composition}

\begin{itemize}[nosep]
\item \textbf{Instance types:} Each propositional instance is an abductive reasoning task: ablated defeasible theory $\Dm$, target literal $q$, candidate hypothesis set $\Hcand$, gold-standard answers $\Hgold$. Level~1: fact completion. Level~2: rule abduction. Level~3: conservative defeater construction. \textsc{CONJURE} instances are Lean~4 quadruples $(T,k,\alpha,\Sigma)$ whose gold answer is a kernel-checked predicate definition (Section~\ref{sec:conjure}).
\item \textbf{Instance count:} Tier~0: 409 (374 L2, 35 L3). Tier~1: 324,511 (182K L1, 142K L2) across five domains. Tier~2: 44,902. Tier~3: 2,580. Tier~RTS: 246+ (100 L1 + 60 L2 + 6 L3 for \texttt{rts\_engagement}, $\sim$80 for \texttt{lux\_ai\_s3}, session-dependent for \texttt{sc2live}). Synthetic: 409 contamination-control instances. \textsc{DeFAb-Hard}: 235-instance pilot released (35 H1 + 100 H2 + 100 H3); full extension pre-registered at 1{,}000+ H1 + 300 H2 + 100 H3 + $\geq 50$ intersection instances. \textsc{CONJURE}: 560 kernel-verified Lean~4/Mathlib instances (191 C1 + 231 C2 + 88 C3 + 50 C4-OPEN) across eight Lakatos families, with a $70/30$ public/hidden split (393 public), plus a 15-instance synthetic contamination twin and 5 adversarial first-guess instances. Total (Tiers 0--3 + RTS + \textsc{DeFAb-Hard} + \textsc{CONJURE}, excluding pending): 373,443+.
\item \textbf{Rule base:} 33.75 million materialized rules from 18 knowledge sources.
\item \textbf{Structural statistics:} Per-level support size, candidate count, gold count, minimal-hypothesis size, and predicate novelty are reported in Section~\ref{sec:statistics} (Table~\ref{tab:structstats}); per-domain volumes and the partition-strategy robustness check are in the same section. Full error-class distributions by model and modality are in Appendix~\ref{app:errortax}.
\item \textbf{Sensitive content:} None. All instances derived from formal logic programs over scientific and commonsense knowledge.
\item \textbf{Personal data:} None. No human subjects, no crowdsourcing.
\end{itemize}

\subsection*{Collection Process}

\begin{itemize}[nosep]
\item \textbf{Acquisition:} Generated by a deterministic polynomial-time pipeline from publicly available knowledge bases. The 35 high-novelty Level~3 defeaters were authored and cross-validated by the authors.
\item \textbf{Time frame:} Source knowledge bases span 1984 (Cyc) to 2025 (UMLS 2025AB, YAGO 4.5).
\item \textbf{Ethical review:} No IRB review required (no human subjects). All source KBs publicly available under open licenses.
\end{itemize}

\subsection*{Preprocessing}

\begin{itemize}[nosep]
\item \textbf{Preprocessing:} Source KBs converted to definite logic programs, then to defeasible theories via $\kappa$. Cross-ontology extraction normalizes concept names. Non-English content filtered at extraction.
\item \textbf{Raw data:} All source KBs publicly available (see Section~\ref{sec:conclusion}).
\end{itemize}

\subsection*{Uses}

\begin{itemize}[nosep]
\item \textbf{Intended uses:} Evaluating foundation models on non-monotonic reasoning, belief revision, and grounded hypothesis generation. Training via DPO, RLHF, and GRPO using the polynomial-time verifier as exact reward. Evaluating LLM-as-commander policies under formal ROE constraints (Tier~RTS).
\item \textbf{Not intended for:} Real-world scientific, legal, or medical decisions. The Tier~RTS KBs use StarCraft~II / Lux~AI as proxy environments and do not encode operational ROE.
\end{itemize}

\subsection*{Distribution}

\begin{itemize}[nosep]
\item \textbf{License:} MIT (pipeline and produced instances). Source KB licenses: YAGO (CC-BY 4.0), WordNet (Princeton License), LKIF Core (Apache 2.0), MatOnto (CC-BY 4.0), Wikidata (CC0 1.0), ConceptNet (CC-BY-SA 4.0), Gene Ontology (CC-BY 4.0), UMLS (NLM License), SUMO (GPLv2), FrameNet (CC-BY 3.0).
\item \textbf{Platform (dataset):} \url{https://huggingface.co/datasets/PatrickAllenCooper/DeFAb} (HuggingFace Datasets, the preferred NeurIPS hosting platform; provides automatic Croissant export, persistent storage, and a bulk-download API). The repository hosts the full instance tiers, synthetic and \textsc{DeFAb-Hard} sets, evaluation-result summaries, and Croissant~1.0 + RAI~1.0 metadata.
\item \textbf{Platform (source):} \url{https://github.com/PatrickAllenCooper/blanc} (generation pipeline, evaluation harness, and figure/table reproduction scripts under the MIT license).
\item \textbf{Access:} Both platforms are reachable from any IP without authentication; the dataset can be loaded directly via the HuggingFace \texttt{datasets} / \texttt{huggingface\_hub} API.
\end{itemize}

\subsection*{Maintenance}

\begin{itemize}[nosep]
\item \textbf{Maintainer:} Patrick Cooper (\texttt{patrick.cooper@colorado.edu}), University of Colorado Boulder.
\item \textbf{Duration:} At least 5 years from publication. Generation pipeline enables re-generation from updated source KBs.
\item \textbf{Versioning:} Semantic versioning. Tier~0 instances frozen at v1.0 for cross-study comparability. Additional tiers are additive.
\item \textbf{Issues:} GitHub issue tracker at \url{https://github.com/PatrickAllenCooper/blanc/issues}.
\end{itemize}

\end{document}